\newcommand{\declarecolor}[2]{\definecolor{#1}{RGB}{#2}\expandafter\newcommand\csname #1\endcsname[1]{\textcolor{#1}{##1}}}
\newcommand{\R}{\mathbb{R}}
\newcommand{\E}{\mathbb{E}}
\newcommand{\Ind}{\mathbbm{1}} %
\DeclareMathOperator*{\argmin}{arg\,min\,}
\newcommand{\mat}[1]{\mathbf{#1}}
\renewcommand{\vec}[1]{\mathbf{#1}}
\newcommand{\lsum}{\mathcal{L}}  %
\newcommand{\loss}{\ell}  %
\newcommand{\Star}[1]{#1\ensuremath{^*}\kern-\scriptspace}
\newcommand{\StarStar}[1]{#1\ensuremath{^{**}}\kern-\scriptspace}
\DeclarePairedDelimiter{\inner}{\langle}{\rangle}
\DeclarePairedDelimiter{\parens}{(}{)}
\DeclarePairedDelimiter{\bracks}{[}{]}
\theoremstyle{plain}
\newtheorem{theorem}{Theorem}[section]
\newtheorem{lemma}[theorem]{Lemma}
\theoremstyle{definition}
\newtheorem{definition}[theorem]{Definition}
\newcommand{\squishlist}{
 \begin{list}{$\bullet$}
  { \setlength{\itemsep}{0pt}
     \setlength{\parsep}{3pt}
     \setlength{\topsep}{3pt}
     \setlength{\partopsep}{0pt}
     \setlength{\leftmargin}{1.5em}
     \setlength{\labelwidth}{1em}
     \setlength{\labelsep}{0.5em} } }
\newcommand{\squishlisttwo}{
 \begin{list}{ }
  { \setlength{\itemsep}{0pt}
     \setlength{\parsep}{3pt}
     \setlength{\topsep}{3pt}
     \setlength{\partopsep}{0pt}
     \setlength{\leftmargin}{1.5em}
     \setlength{\labelwidth}{1em}
     \setlength{\labelsep}{0.5em} } }
\newcommand{\squishend}{
  \end{list}  }
\title{Unified Embedding: Battle-Tested Feature Representations for Web-Scale ML Systems}
\newcommand{\equalcontribution}{${}^*$}
\author{
Benjamin Coleman\equalcontribution \\
Google DeepMind\\
\texttt{colemanben@google.com}\\
\And
Wang-Cheng Kang\equalcontribution \\
Google DeepMind\\
\texttt{wckang@google.com}\\
\And
Matthew Fahrbach\\
Google Research\\
\texttt{fahrbach@google.com}\\
\And
Ruoxi Wang\\
Google DeepMind\\
\texttt{ruoxi@google.com}\\
\And
Lichan Hong\\
Google DeepMind\\
\texttt{lichan@google.com}\\
\And
Ed H. Chi\\
Google DeepMind\\
\texttt{edchi@google.com}\\
\And
Derek Zhiyuan Cheng\\
Google DeepMind\\
\texttt{zcheng@google.com} \\
}
\newcommand{\mytilde}{\textasciitilde}
\begin{document}

\maketitle
\renewcommand{\thefootnote}{\fnsymbol{footnote}}
\footnotetext[1]{Equal contribution.}
\renewcommand{\thefootnote}{\arabic{footnote}}
\begin{abstract}
Learning high-quality feature embeddings efficiently and effectively is critical for the performance of web-scale machine learning systems.
A typical model ingests hundreds of features with vocabularies on the order of millions to billions of tokens. The standard approach is to represent each feature value as a $d$-dimensional embedding, introducing hundreds of billions of parameters for extremely high-cardinality features.
This bottleneck has led to substantial progress in alternative embedding algorithms.
Many of these methods, however, make the assumption that each feature uses an independent embedding table.
This work introduces a simple yet highly effective framework, \emph{Feature Multiplexing},
where one single representation space is used across many different categorical features.
Our theoretical and empirical analysis reveals that multiplexed embeddings can be decomposed into components from each constituent feature, allowing models to distinguish between features.
We show that multiplexed representations lead to Pareto-optimal parameter-accuracy tradeoffs for three public benchmark %
datasets.
Further, we propose a highly practical approach called \emph{Unified Embedding} with three major benefits: simplified feature configuration, strong adaptation to dynamic data distributions, and compatibility with modern hardware.
Unified embedding gives significant improvements in offline and online metrics compared to highly competitive baselines across five web-scale
search, ads, and recommender systems,
where it serves billions of users across the world in industry-leading products.
\end{abstract}

\section{Introduction}
\label{sec:introduction}

There have been many new and exciting breakthroughs across academia and industry towards large-scale models recently. The advent of larger datasets coupled with increasingly powerful accelerators has enabled machine learning (ML) researchers and practitioners to significantly scale up models. This scale has yielded surprising emergent capabilities, such as human-like conversation and reasoning \citep{bubeck2023sparks, wei2022chain}.
Interestingly, large-scale model architectures are headed towards two extreme scenarios. In natural language understanding, speech, and computer vision, the transformer \citep{vaswani2017attention} dominates the Pareto frontier. Most transformer parameters are located in the hidden layers, with very small embedding tables (e.g., \mytilde0.63B embedding parameters in the 175B GPT-3~\citep{DBLP:conf/nips/BrownMRSKDNSSAA20}).
On the other hand, search, ads, and recommendation (SAR) systems require staggering scale for state-of-the-art performance (e.g., 12T parameters in~\citet{DBLP:conf/isca/MudigereHHJT0LO22}),
but here, most of the parameters are in the embedding tables.
Typical downstream hidden layers are 3 to 4 orders of
magnitude smaller than the embedding tables.
Recent work blurs the boundary between these extremes---large transformers started adopting bigger embedding tables for new types of tokens,
and new SAR models leverage deeper and ever more-complicated networks.
Therefore, embedding learning is a core technique that we only expect to become more critical and relevant to large-scale models in the future.

Motivated by web-scale ML for SAR systems,
we study the problem of learning embeddings for categorical (sparse) features.\footnote{For simplicity, we call this problem ``feature embedding learning,'' but we are primarily concerned with representing sparse features such as ZIP codes, word tokens, and product IDs.} 
The weekend golf warriors in the ML community may appreciate our analogy between embeddings and the golf club grip~\citep{hogan_ben}. In golf, the quality of your grip (feature embeddings) dictates the quality of your swing (model performance).
Feature embeddings are the only point of contact between the data and the model,
so it is critical to learn high-quality representations.

Feature embedding learning has been a highly active research area for nearly three decades~\citep{rumelhart1986learning,
DBLP:conf/nips/BengioDV00,DBLP:conf/nips/MikolovSCCD13,papyan2020prevalence}.
In the linear model era, sparse features were represented as one-hot vectors. Linear sketches for dimension reduction became a popular way to avoid learning coefficients for each coordinate~\citep{weinberger2009feature}.
In deep learning models, sparse features are not sketched but rather represented directly as embeddings. Learned feature embeddings carry a significant amount of semantic information, providing avenues to meaningful model interactions in the high-dimensional latent space.

\paragraph{In practice} The simplest transformation is to represent each feature value by a row in an $N \times d$ matrix (i.e., an \textit{embedding table}). This approach is sufficient for many language tasks, but SAR features often have a massive vocabulary (e.g., tens of billions of product IDs in commerce) and highly dynamic power-law distributions. It is impractical to store and serve the full embedding table.

The \textit{``hashing trick''} \citep{moody1988fast, weinberger2009feature} provides a workaround by randomly assigning feature values to one of $M$ rows using a hash function.
\textit{Hash embeddings} \citep{tito2017hash} are a variation on the hashing trick where each feature value is assigned to multiple rows in the table. The final embedding is a weighted sum of rows, similar to a learned form of two-choice hashing~\citep{mitzenmacher2001power}. \textit{Compositional embeddings} \citep{shi2020compositional} look up the feature in several independent tables
and construct the final embedding from the components by concatenation, addition, or element-wise product. \textit{HashedNet} embeddings ~\citep{chen2015compressing} independently look up each dimension of the embedding in a flattened parameter space. 
\textit{ROBE embeddings} \citep{desai2022random} look up chunks, instead of single dimensions, to improve cache efficiency. \cite{tsang2022clustering} learn a sparse matrix to describe the row/dimension lookups, and \textit{deep hash embeddings} \citep{kang2021learning} use a neural network to directly output the embedding.

\paragraph{In theory} The analysis of embedding algorithms
is heavily influenced by linear models and the work of \citet{weinberger2009feature} on the hashing trick. This line of reasoning starts with the assumption that all inputs have a ``true'' representation in a high-dimensional space that preserves all of the properties needed for learning (e.g., the one-hot encoding).
The next step is to perform linear dimension reduction to compress the ground-truth representation into a smaller embedding, while approximately preserving inner products and norms. The final step is to extend the inner product argument to models that only interact with their inputs through inner products and norms (e.g., kernel methods).

In this framework, a key opportunity for improving performance is to reduce the dimension-reduction error. Early methods were based on the Johnson--Lindenstrauss lemma and sparse linear projections~\citep{achlioptas2001database,li2006very,weinberger2009feature}.
Since then, a substantial effort has gone into improving the hash functions~\citep{dahlgaard2017practical} and tightening the theoretical analysis of feature hashing~\citep{dasgupta2003elementary,freksen2018fully}. 
Learned embeddings are now state-of-the-art, but the focus on dimension reduction remains: most recent algorithms are motivated by dimension reduction arguments. 
Previous attempts to understand hash embeddings do not study the full give-and-take relationship between feature encoding and the downstream learning problem. In this work, we carry the analysis forward to better understand learned embeddings.

\paragraph{Contributions}
We present an embedding learning framework
called \textit{Feature Multiplexing}, where multiple features share one representation space.
Our theoretical analysis and experimental studies on public benchmark datasets show that Feature Multiplexing is theoretically sound and generalizable to many popular embedding methods. 
We also propose a highly-efficient multiplexed approach called \textit{Unified Embedding}, which is battle-tested and serves billions of users in industry-leading products via various web-scale search, ads, and recommendation models. %

\begin{enumerate}
\item \textbf{Feature multiplexing}:
We propose the Feature Multiplexing framework, where a single representation space (embedding table) is used to support different sparse features.
All of the SOTA benchmark embedding algorithms in our experiments show better results when adopting Feature Multiplexing. Furthermore, the Unified Embedding approach (i.e., multiplexing + hashing trick) outperforms many SOTA embedding methods that are noticeably more complicated to implement and impractical for low-latency serving systems on modern day hardware.

\item \textbf{Analysis beyond collision counts}: We analyze hash embeddings in the standard framework as well as in a supervised learning setting to capture interactions between feature embedding and downstream learning.
Our analysis gives insights and testable predictions about unified embeddings that cannot be obtained by dimension reduction arguments alone.

\item \textbf{Battle-tested approach and industrial experiments at scale}: Multi-size Unified Embedding has been launched in over a dozen web-scale SAR systems, significantly improving user engagement metrics and key business metrics. In addition, Unified Embedding presents major practical benefits: simplified feature  configuration, strong adaptation to dynamic data distributions, and compatibility with ML accelerators. In \Cref{sec:industry_deployment}, we share our insights, challenges, and practical lessons on adopting Unified Embedding.
\end{enumerate}

\section{Preliminaries}
\label{sec:preliminaries}

SAR applications often present supervised learning problems over categorical (sparse) features that represent properties of users, queries, and items.
Each categorical feature consists of one or more \emph{feature values} $v$
drawn from the \emph{vocabulary} of the feature $V$.
For a concrete example, consider the classic click-through rate (CTR) prediction task for ads systems, where we want to predict the probability of a user clicking on an advertisement. 
The ``\texttt{ad\_id}'' feature may take on hundreds of billions of possible values---one for each unique ad served by the platform---while ``\texttt{site\_id}'' describes the website where that ad was shown (a.k.a., the publisher).

We embed the values of each feature into a space that is more suitable for modeling, e.g., $\R^d$ or the unit $(d-1)$-sphere.
Given a vocabulary of $N$ tokens $V = \{v_1, v_2, \dots, v_N\}$, we learn a transformation that maps $v \in V$ to an embedding vector $\vec{e} \in \R^d$.
After repeating this process for each categorical feature, we get a set of embeddings that are then concatenated and fed as input to a neural network~\citep{covington2016deep,DBLP:conf/recsys/AnilGHJLLP00SSY22}.

\paragraph{Problem statement}
We are given a dataset $D = \{(\boldsymbol{x}_1, y_1), (\boldsymbol{x}_2, y_2), \dots, (\boldsymbol{x}_{|D|}, y_{|D|})\}$ of examples with labels.
The examples consist of values from $T$ different categorical features with vocabularies $\{V_1, V_2, \dots, V_T\}$, where feature $t$ has vocabulary $V_t$. 
Unless otherwise stated, we do not consider multivalent features and instead assume that each example $\boldsymbol{x}$ has one value for each categorical feature, i.e.,
$\boldsymbol{x} = \begin{bmatrix} v_1, v_2, \dots, v_T \end{bmatrix}$ where $v_i \in V_i$.
However, this is for notational convenience only---the techniques in this paper extend to missing and multivalent feature values well.

Formally, the embedding table is a matrix $\mat{E} \in \mathbb{R}^{M \times d}$ and the embedding function $g(\vec{x}; \mat{E})$ transforms the categorical feature values into embeddings. We let $h(v): V  \rightarrow [M]$ be a 2-universal hash function~\citep{vadhan2012pseudorandomness} that assigns a feature value to a row index of the embedding table. We also consider a model function $f(\mat{e}; \boldsymbol{\theta})$ that transforms the concatenated embeddings into a model prediction, which is penalized by a loss $\loss$. Using this decomposition, we define the joint feature learning problem as follows: 
\begin{equation}
\label{eq:supervised_embedding_learning}
    \argmin_{\mat{E}, \boldsymbol{\theta}} \lsum_D(\mat{E}, \boldsymbol{\theta}),
    \quad \text{where} \quad
    \lsum_D(\mat{E}, \boldsymbol{\theta}) = \sum_{(\boldsymbol{x}, y) \in D} \loss(f(g(\boldsymbol{x}; \mat{E}); \boldsymbol{\theta}), y).
\end{equation}
We use independent hash functions $h_t(v)$ for each feature $t \in [T]$
so that tokens appearing in multiple vocabularies have the chance to
get mapped to different locations, depending on the feature.
We use the notation $\mat{e}_m$ to denote the $m$-th row of $\mat{E}$,
which means $\mat{e}_{h(u)}$ is the embedding of $u$.
Lastly, we let $\Ind_{u,v}$ be the indicator variable that denotes a hash collision between $u$ and $v$, i.e., $h(u) = h(v)$.

\section{Feature Multiplexing: The Inter-Feature Hashing Trick}
\label{sec:unified_embeddings}
It is standard practice to use a different hash table for each categorical feature. There are a few approaches that deviate from this common practice---for example, HashedNet and ROBE-Z models use parameter sharing across all features, depending on the implementation. However, conventional wisdom suggests that each categorical vocabulary benefits from having an independent representation, so shared representations are not well-studied.

\paragraph{Parameter reuse} We present a \emph{Feature Multiplexing} framework that only uses one embedding table to represent all of the feature vocabularies.
In contrast, typical models in SAR systems have hundreds of embedding tables,
each representing one input feature.
Multiplexed embeddings allow models to reuse parameters and lookup operations, improving efficiency in terms of both space and time.
This is critical for models that must fit into limited GPU/TPU memory or satisfy latency and serving cost constraints.
In principle, any feature embedding scheme
(e.g., hashing trick, multihash, compositional, or ROBE-Z)
can be used as the shared representation for feature multiplexing. 
In \Cref{sec:experiments}, we experiment with multiplexed versions of six different embedding algorithms.
Here, we focus on a multi-size (multi-probe) version of the multiplexed feature hashing trick (\Cref{fig:method}), which we call \textit{Unified Embedding}.
Unified Embedding presents unique practical advantages that we describe in \Cref{sec:industry_deployment}.

\begin{figure}[t]
\centering
\includegraphics[width=0.90\textwidth]{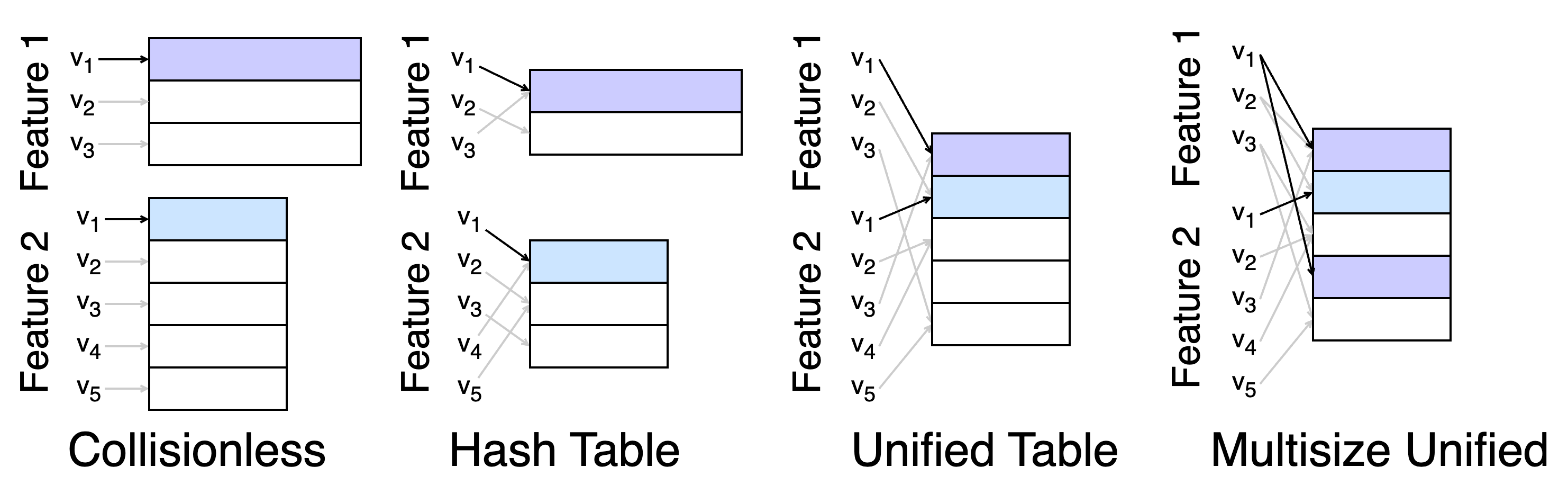}

\captionof{figure}{Embedding methods for two categorical features.
We highlight the lookup process for the first value $v_1$ of each feature.
Hash tables randomly share representations within each feature,
while Unified Embedding shares representations across features.
To implement Unified Embedding with different dimensions
(multi-size or variable-length),
we perform multiple lookups and concatenate the results.}
\label{fig:method}
\end{figure}

\paragraph{Tunable embedding width} By using a unified table, we make the architectural assumption that all features share the same embedding dimension.
However, this is often sub-optimal in practice since independent tuning of
individual embedding dimensions can significantly improve model performance.
For use-cases where different features require different embedding dimensions,
we concatenate the results of a variable number of table lookups to obtain the final embedding.
While this limits the output dimension to multiples of the table dimension,
this is not a limiting constraint
(and is needed by other methods such as ROBE-Z and product quantization).

\paragraph{Shared vocabulary} It is common for categorical feature vocabularies to share tokens. For example, two features that represent character bigrams of different text fields (e.g., query and document) might have substantial overlap.
Our default approach is to use a different hash function for each feature
by using a different hash seed for each feature.
However, in cases where many semantically-similar features are present,
the performance may slightly improve by using the same hash function.

\section{Analysis of Feature Multiplexing}

In this section, we present theoretical arguments about feature multiplexing
applied to the ``multiplexed feature hashing trick'' setting.
We start with a brief analysis in the dimension-reduction framework, suggesting that multiplexing can optimally load balance feature vocabularies across buckets in the table.
This classic framework, however, cannot be used to distinguish between collisions that happen within the vocabulary of a feature (i.e., \emph{intra-feature}) and those occurring across features (i.e., \emph{inter-feature}).

To understand these interactions, we also need to consider the learning task and model training.
Thus, we analyze the gradient updates applied to each bucket under a simplified but representative model---binary logistic regression with learnable feature embeddings for CTR prediction. 
Our analysis demonstrates a novel relationship between the embedding parameters and model parameters during gradient descent.
Specifically, we find that the effect of inter-feature collisions can be mitigated
if the model projects different features using orthogonal weight vectors.
In \Cref{subsec:logistic_regresion},
we empirically verify this by training models on public benchmark on click-through prediction (Criteo)
and finding that the weight vectors do, in fact, orthogonalize. We defer the proofs and detailed derivations in this section to \Cref{app:analysis}.

\subsection{Parameter Efficiency in The Dimension-Reduction Framework}
\label{sec:dimension_reduction}

In the feature hashing problem, we are given two vectors
$\vec{x}, \vec{y} \in \R^N$ that we project using a linear map $\phi : \mathbb{R}^N \rightarrow \mathbb{R}^M$
given by the sparse $\{\pm1\}$ matrix in~\cite{weinberger2009feature}.
The goal is for $\phi$ to minimally distort inner product, i.e., $\langle \phi(\vec{x}), \phi(\vec{y}) \rangle \approx \langle \vec{x}, \vec{y} \rangle$.
Since the estimation error directly depends on the variance, it is sufficient to compare hashing schemes based on the moments of their inner product estimators.
We examine the variance of the hashing trick and multiplexing trick.

To analyze multiple features at once in this framework,
we consider the concatenations $\vec{x} = [\vec{x}_1, \vec{x}_2]$ and $\vec{y} = [\vec{y}_1, \vec{y}_2]$.
The components $\vec{x}_1 \in \{0,1\}^{N_1}$ and $\vec{x}_2 \in \{0,1\}^{N_2}$
are the one-hot encodings (or bag-of-words)
for the two vocabularies $V_1$ and $V_2$.
The standard hashing trick approximates $\langle \vec{x}, \vec{y} \rangle$ by independently projecting $\vec{x}_1$ to dimension $M_1$ and $\vec{x}_2$ to dimension $M_2$.
The multiplexed version uses a single projection, but into dimension $M_1 + M_2$. Now, we formalize the hashing trick of \citet{weinberger2009feature}, but with slightly revised notation. Given a set of tokens from the vocabulary,
the embedding is the signed sum of value counts within each hash bucket.

\begin{definition}
\label{def:hash_trick}
Let $h: V \rightarrow \{1, 2, \dots, M\}$ be a 2-universal hash function and $\xi: V \rightarrow \{-1, +1\}$ be a sign hash function.
The function $\phi_{h, \xi}: 2^V \rightarrow \R^M$ is defined as
$
    \phi_{h, \xi}(W) = \sum_{w \in W}\xi(w)\vec{u}_{h(w)},
$
where $\vec{u}_i$ is the $i$-th unit basis vector in $\R^M$.
\end{definition}

The following result compares multiplexed and standard embeddings in the dimension reduction framework. Note that the randomness
is over the choice of hash functions $h(v)$ and $\xi(v)$.
This result also directly extends to concatenating $T$ feature embeddings.

\begin{restatable}{proposition}{PropFeatureHashingMoments}
\label{prop:feature_hash_moments}
For any $\mat{x}_1, \mat{y}_1 \in \{0, 1\}^{N_1}$ and
$\mat{x}_2, \mat{y}_2 \in \{0, 1\}^{N_2}$,
let $\mat{x} = [\mat{x}_1, \mat{x}_2]$,
$\mat{y} = [\mat{y}_1, \mat{y}_2]$
denote their concatenations.
Let $\mu_U$, $\mu_H$, $\sigma^2_U$, and $\sigma^2_H$ be the mean and variance of
$\inner{\phi_{h, \xi}(\vec{x}), \phi_{h, \xi}(\vec{y})}$ for multiplexed and hash encodings, respectively.
Then,
$
    \mu_U = \mu_H = \inner{\vec{x}, \vec{y}}
$
and
\begin{align*}
    \sigma^2_U
    &=
    \frac{\|\vec{x}\|_2^2 \|\vec{y}\|_2^2 + \inner{\vec{x}, \vec{y}}^2 - 2 \inner*{\mat{x}, \mat{y}}}{M_1 + M_2},
    \\
    \sigma^2_H
    &=
    \frac{\|\vec{x}_1\|_2^2 \|\vec{y}_1\|_2^2 + \langle \vec{x}_1, \vec{y}_1\rangle^2 - 2 \inner*{\mat{x}_1, \mat{y}_1}}{M_1} + \frac{\|\vec{x}_2\|_2^2 \|\vec{y}_2\|_2^2 + \inner{\vec{x}_2, \vec{y}_2}^2 - 2 \inner*{\mat{x}_2, \mat{y}_2}}{M_2}.
\end{align*}
\end{restatable}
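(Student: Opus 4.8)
\section*{Proof proposal}

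The plan is to write the inner-product estimator as a bilinear form in the sign and collision variables, compute its first two moments for a single hash of width $M$, and then recover both variance formulas by specializing $M$ and invoking block independence. Throughout I identify a binary vector $\vec{x}$ with its support so that, by Definition~\ref{def:hash_trick}, $\phi_{h,\xi}(\vec{x}) = \sum_i x_i\, \xi(i)\, \vec{u}_{h(i)}$, and I write $\Ind_{i,j} = \Ind[h(i)=h(j)]$ for the collision indicator.

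First I would expand the estimator as
\begin{equation*}
\inner{\phi_{h,\xi}(\vec{x}), \phi_{h,\xi}(\vec{y})} = \sum_{i,j} x_i y_j\, \xi(i)\xi(j)\, \Ind_{i,j}.
\end{equation*}
Because $h$ and $\xi$ are independent and $\xi$ has mean zero with $\xi(i)^2 = 1$, the off-diagonal terms ($i \neq j$) have vanishing expectation while each diagonal term contributes $x_i y_i$, so the estimator is unbiased with mean $\inner{\vec{x},\vec{y}}$ for any width $M$. This gives $\mu_U = \inner{\vec{x},\vec{y}}$ directly; and since the standard scheme's estimator is the sum of two such block estimators, linearity yields $\mu_H = \inner{\vec{x}_1,\vec{y}_1} + \inner{\vec{x}_2,\vec{y}_2} = \inner{\vec{x},\vec{y}}$, establishing the mean claim.

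For the variance I would square the double sum into a quadruple sum over $(i,j,k,l)$ and factor each expectation as $\E[\xi(i)\xi(j)\xi(k)\xi(l)]\cdot\E[\Ind_{i,j}\Ind_{k,l}]$, using independence of $\xi$ from $h$. The sign expectation is nonzero only when the indices pair up, leaving four patterns: the fully-matched case $i=j=k=l$; the ``diagonal-diagonal'' case $i=j,\,k=l$ with $i\ne k$ (no collision constraint, probability $1$); and the two ``cross'' cases $i=k,\,j=l$ and $i=l,\,j=k$ with $i\ne j$, each imposing a single genuine collision $\Ind_{i,j}$ of probability $1/M$ by the $2$-universality of $h$. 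The fully-matched and diagonal-diagonal patterns combine to $\inner{\vec{x},\vec{y}}^2 = \mu^2$, so they cancel upon forming the variance, leaving only the two cross terms:
\begin{equation*}
\sigma^2 = \frac{1}{M}\Bigl(\|\vec{x}\|_2^2\,\|\vec{y}\|_2^2 + \inner{\vec{x},\vec{y}}^2 - 2\sum_i x_i^2 y_i^2\Bigr),
\end{equation*}
and for binary vectors $\sum_i x_i^2 y_i^2 = \sum_i x_i y_i = \inner{\vec{x},\vec{y}}$, giving the single-hash variance with $M$ in the denominator.

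From here the two formulas follow by specialization. Setting $M = M_1 + M_2$ and applying the single-hash variance to the concatenated $\vec{x},\vec{y}$ yields $\sigma_U^2$. For the standard scheme, the estimator splits as $\inner{\phi(\vec{x}_1),\phi(\vec{y}_1)} + \inner{\phi(\vec{x}_2),\phi(\vec{y}_2)}$ with the two blocks hashed by independent functions, so the variances add: the single-hash formula for $(\vec{x}_1,\vec{y}_1)$ with denominator $M_1$ plus the one for $(\vec{x}_2,\vec{y}_2)$ with denominator $M_2$ produces $\sigma_H^2$, and the extension to $T$ concatenated features is immediate by the same additive-variance argument. The main obstacle is the fourth-moment bookkeeping: one must carefully enumerate the index-coincidence patterns in the quadruple sum, verify that the three-equal and all-distinct patterns are killed by the mean-zero sign hash, and confirm that $2$-universality of $h$ is exactly what controls the single surviving collision probability in each cross term.
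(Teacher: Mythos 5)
Your proposal is correct, and it reaches the result by the same overall reduction as the paper: treat the unified encoding as a single hash of width $M_1+M_2$ applied to the concatenated vectors, and treat the standard encoding as a sum of two independent block estimators whose variances add. The difference is in how the single-hash moment formula is obtained. The paper simply cites Lemma 2 of \citet{weinberger2009feature} (restated as \Cref{lem:hashing_trick}), which already gives $\E\bracks{\inner{\phi(\mat{x}),\phi(\mat{y})}}=\inner{\mat{x},\mat{y}}$ and $\mathrm{Var} = \frac{1}{m}\parens{\|\mat{x}\|_2^2\|\mat{y}\|_2^2 + \inner{\mat{x},\mat{y}}^2 - 2\inner{\mat{x}\circ\mat{y},\mat{x}\circ\mat{y}}}$, and then only needs the observation that $\inner{\mat{x}\circ\mat{y},\mat{x}\circ\mat{y}}=\inner{\mat{x},\mat{y}}$ for binary vectors---exactly the simplification you make at the end of your fourth-moment computation. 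You instead re-derive that lemma from scratch via the quadruple-sum expansion and enumeration of index-coincidence patterns, which makes the argument self-contained and exposes where each term in the variance comes from; the cost is the bookkeeping, plus a hypothesis you should state explicitly: for the all-distinct pattern $\E[\xi(i)\xi(j)\xi(k)\xi(l)]$ to vanish you need (at least) 4-wise independence or full independence of the sign hash, not merely that $\xi$ is mean-zero and pairwise independent (pairwise does handle the three-equal pattern, since $\E[\xi(i)^3\xi(l)]=\E[\xi(i)\xi(l)]$). This is the same assumption implicitly inherited when the paper cites Weinberger et al., so it is not a gap relative to the paper's own proof, but your sketch attributes the cancellation to ``mean-zero'' alone, which is slightly too weak. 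The final assembly---specializing $M=M_1+M_2$ for $\sigma_U^2$, and adding the two independent block variances with denominators $M_1$ and $M_2$ for $\sigma_H^2$---matches the paper's proof exactly.
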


\paragraph{Observations}
\Cref{prop:feature_hash_moments}
shows that multiplexed embeddings can do a good job balancing hash collisions
across the parameter space.
Consider the problem of estimating the inner product $\inner{\mat{x},\mat{y}}$.
Suppose that $\|\vec{x}_1\|_2^2 = \|\vec{y}_1\|_2^2 = k_1$ and $\|\vec{x}_2\|_2^2 = \|\vec{y}_2\|_2^2 = k_2$,
i.e., features 1 and 2 are multivalent with $k_1$ and $k_2$ values, and assume $\textbf{x}$ and $\textbf{y}$ are orthogonal,
i.e., $\inner{\mat{x},\mat{y}} = 0$.
\Cref{prop:feature_hash_moments} tells us that
$\sigma_U^2 = (k_1 + k_2)^2 / (M_1 + M_2)$
and
$\sigma_H^2 = k_1^2 / M_1 + k_2^2 / M_2$.
The difference $\sigma_H^2 - \sigma_U^2$
factors as
$(\sqrt{M_1 / M_2}k_1 - \sqrt{M_2 / M_1}k_2)^2 / (M_1 + M_2)$,
which is non-negative, implying $\sigma_U^2 \leq \sigma_H^2$.

\subsection{Trajectories of Embeddings During SGD for Single-Layer Neural Networks}
\label{subsec:logistic_regresion}

Intuition suggests that inter-feature collisions are less problematic than intra-feature collisions.
For example, if $x \in V_1$ collides with $y \in V_2$
but no other values collide across vocabularies,
then the embeddings from other tokens in $V_1$ and $V_2$ are free to rotate around the shared embedding for $x, y$ to preserve their optimal similarity relationships. 

We analyze a logistic regression model with trainable embeddings for binary classification
(i.e., a single-layer neural network with hashed one-hot encodings as input).
This corresponds to Eq.~\Cref{eq:supervised_embedding_learning} where $f(\vec{z}; \boldsymbol{\theta})$ is the sigmoid function $\sigma_{\boldsymbol{\theta}}(\vec{z}) = 1 / (1 + \exp(-\langle \vec{z}, \boldsymbol{\theta}\rangle))$, $\loss$ is the binary cross-entropy loss, and $y \in \{0,1\}$ (e.g., click or non-click).
We concatenate the embeddings of each feature,
which means the input to $f(\mat{z};\boldsymbol{\theta})$
is
$\mat{z} = g(\boldsymbol{x}; \mat{E})
=
[\vec{e}_{h_{1}(\boldsymbol{x}_1)}, \vec{e}_{h_{2}(\boldsymbol{x}_2)}, \dots, \vec{e}_{h_{T}(\boldsymbol{x}_T)}]$,
where $\vec{e}_{h_{t}(\boldsymbol{x}_t)}$ is the embedding for the $t$-th
feature value in example $\boldsymbol{x}$.
We write the logistic regression weights as
$\boldsymbol{\theta} = [\boldsymbol{\theta}_1, \boldsymbol{\theta}_2, \dots, \boldsymbol{\theta}_T]$
so that embedding $\vec{e}_{h_{t}(\boldsymbol{x}_t)}$ for feature $t$
is projected via $\boldsymbol{\theta}_t \in \mathbb{R}^{M}$.
We illustrate this simple but representative
model architecture in \Cref{fig:logreg_experiment}.

The following analysis holds in general for $T$ features, 
but we consider the problem for two categorical features with vocabularies
$V_1$ and $V_2$ to simplify the presentation.
We write the logistic regression objective as follows in Eq.~\Cref{eq:logistic_objective}, using $C_{u,v,1}$ and $C_{u,v,0}$
to denote the number of examples for which $\boldsymbol{x} = [u,v]$ and $y = 1$ and $y=0$, respectively.
We present a full derivation in \Cref{app:analysis_grads}.
\begin{equation}
\label{eq:logistic_objective}
    \lsum_D(\mat{E},\boldsymbol{\theta})
    = - \sum_{u \in V_1} \sum_{v \in V_2} C_{u,v,0} \log\exp(\boldsymbol{\theta}^{\top}\vec{e}_{u,v}) - (C_{u,v,0}+C_{u,v,1})\log( 1 + \exp(\boldsymbol{\theta}^{\top}\vec{e}_{u,v}))
\end{equation}

\begin{figure}[t]
\centering
\raisebox{0.1cm}{\includegraphics[width=0.36\textwidth]{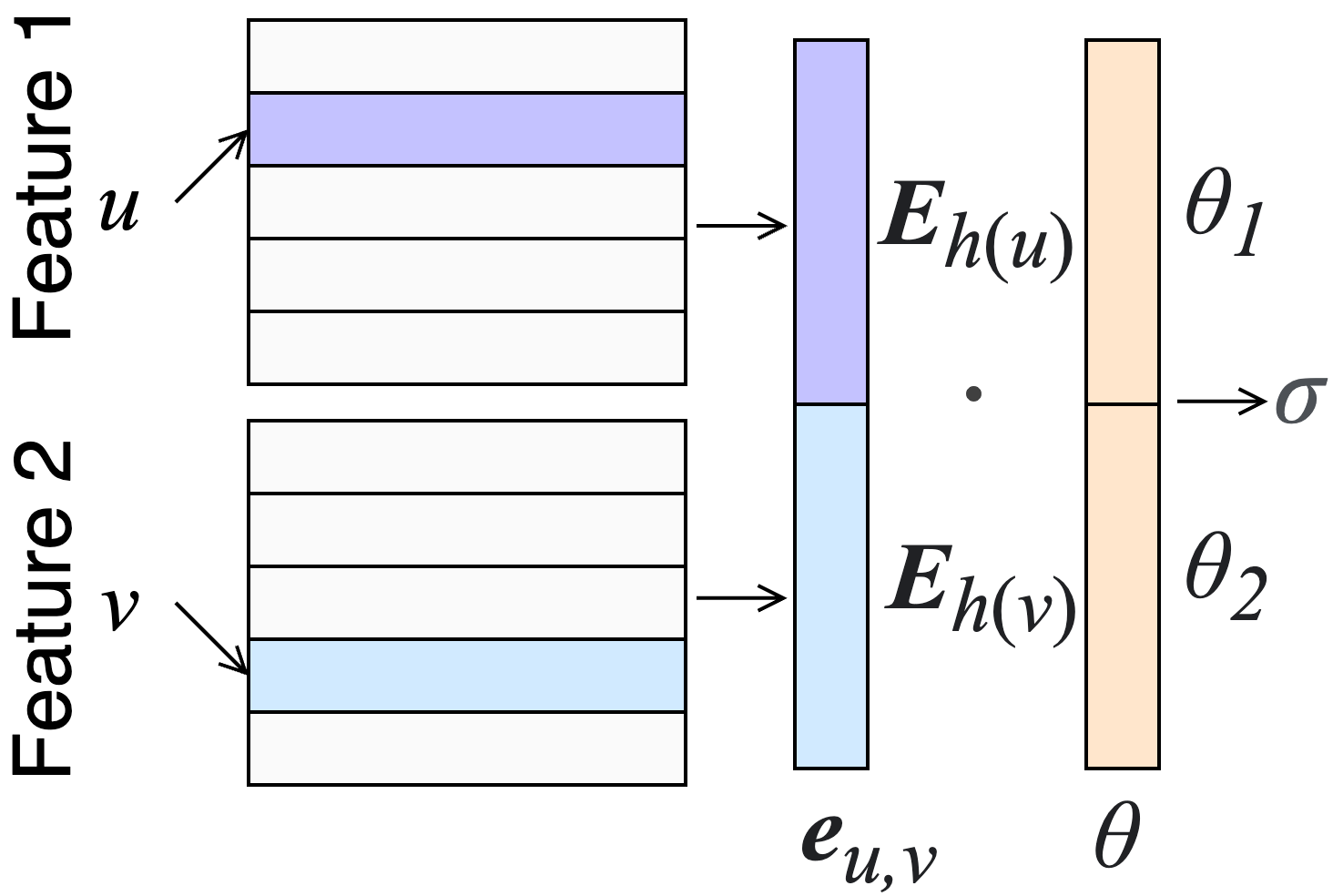}}
\includegraphics[width=0.31\textwidth]{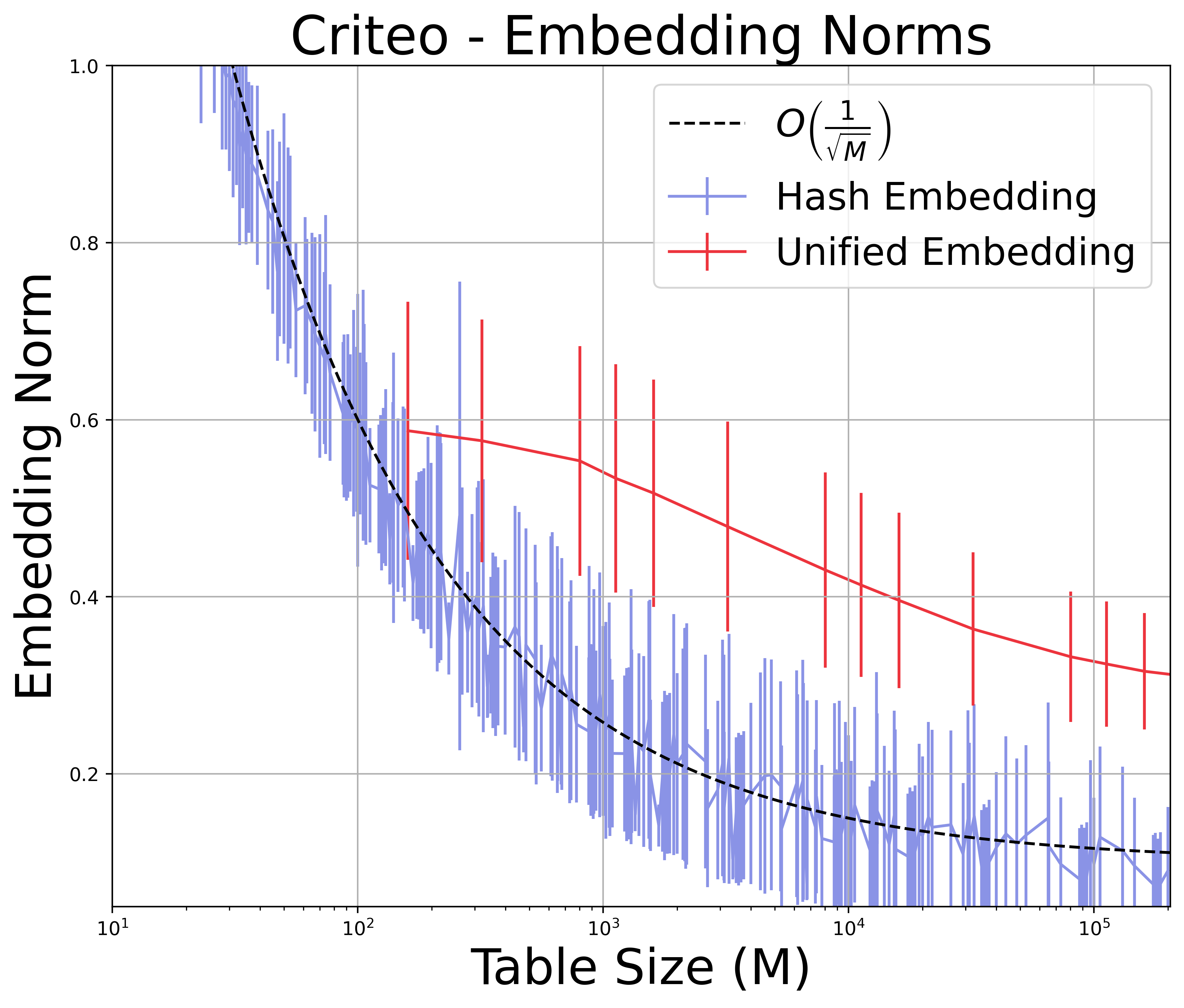}
\includegraphics[width=0.31\textwidth]{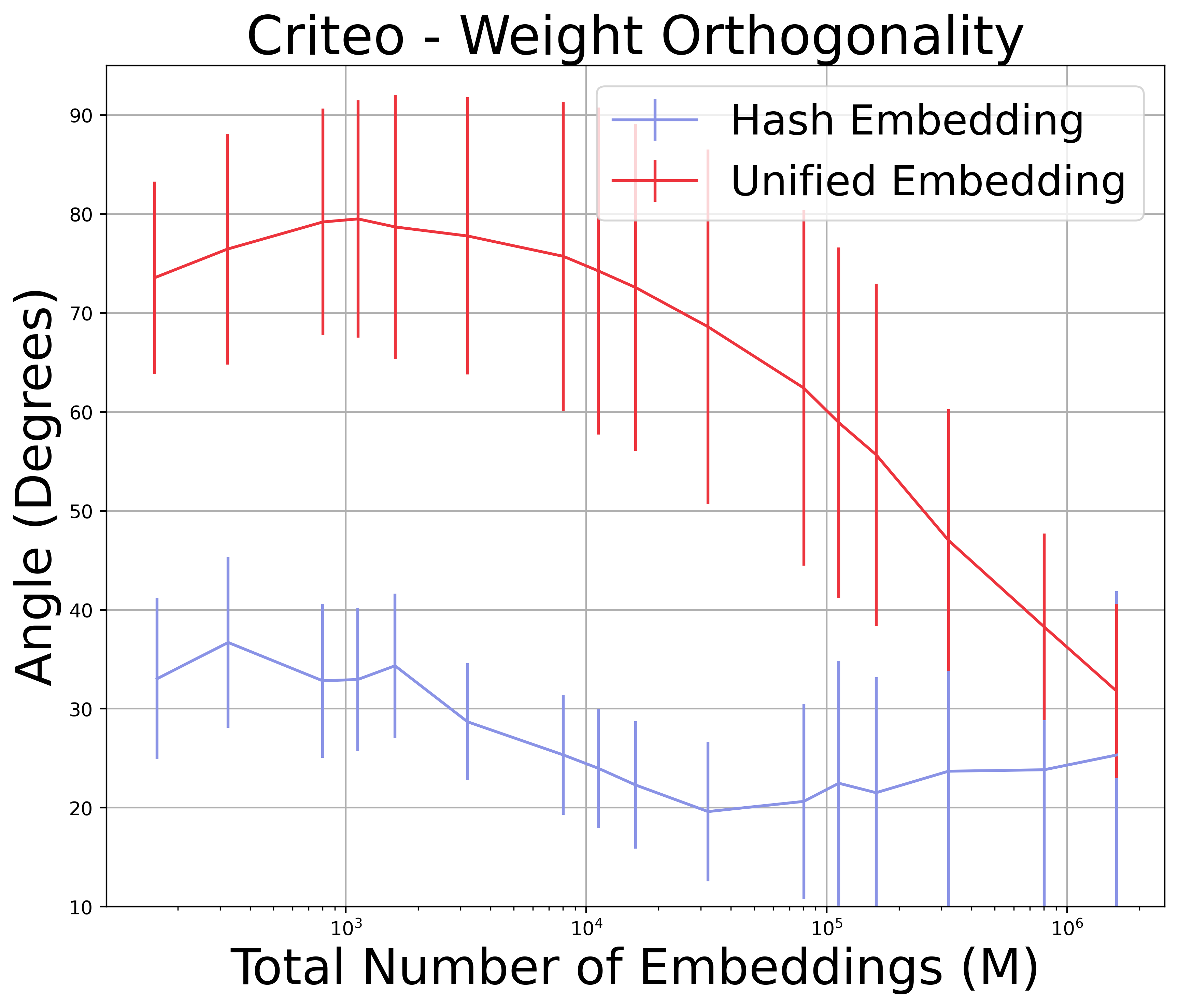}

\captionof{figure}{
Single-layer neural embedding model with per-feature weights $\boldsymbol{\theta}_t$ (left).
Mean embedding $\ell^2$-norm (middle)
and mean angle between all pairs of weight vectors $\boldsymbol{\theta}_{t_1},\boldsymbol{\theta}_{t_2}$ (right)
as a function of table size for Criteo across all 26 categorical features.
Note that the horizontal axes are in log scale.}
\label{fig:logreg_experiment}
\end{figure}

To understand the effect of hash collisions on training dynamics,
we take the gradient with respect to $\vec{e}_{h_1(u)}$,
i.e., the embedding parameters that represent $u \in V_1$.
By studying the gradients for collisionless, hash, and multiplexed embeddings,
we quantify the effects of intra-feature and inter-feature collisions. 
Interestingly, all three gradients can be written using the following three terms:
\begin{align}
    \nabla_{\mat{e}_{h_1(u)}} \lsum_D(\mat{E},\boldsymbol{\theta}) 
    &=
    \boldsymbol{\theta}_1 \sum_{v \in V_2} C_{u,v,0} - (C_{u,v,0}+C_{u,v,1}) \sigma_{\boldsymbol{\theta}}(\vec{e}_{u,v}) \label{eq:true_comp} \\
    &\hspace{0.35cm}+ \boldsymbol{\theta}_1 \sum_{\substack{w \in V_1 \\ w \neq u }} \Ind_{u,w} \sum_{v \in V_2} C_{w,v,0} - (C_{w,v,0}+C_{w,v,1}) \sigma_{\boldsymbol{\theta}}(\vec{e}_{u,v}) \label{eq:intra_feature} \\
    &\hspace{0.35cm}+ \boldsymbol{\theta}_2 \sum_{v \in V_2} \Ind_{u,v} \sum_{w \in V_1} C_{w,v,0} - (C_{w,v,0}+C_{w,v,1}) \sigma_{\boldsymbol{\theta}}(\vec{e}_{w,v}). \label{eq:inter_feature}
\end{align}

For collisionless embeddings, component (\ref{eq:true_comp}) is the full gradient. For hash embeddings, the gradient is the sum of components (\ref{eq:true_comp}) and (\ref{eq:intra_feature}). The multiplexed embedding gradient is the sum of all three terms.

\paragraph{Insights}
Our core observation is that embedding gradients can be decomposed
into a collisionless (true) component (\ref{eq:true_comp}), an intra-feature component (\ref{eq:intra_feature}), and an inter-feature component (\ref{eq:inter_feature}).
The components from hash collisions act as a gradient bias.
The intra-feature component is in the same direction as the true component,
implying that intra-feature collisions are not resolvable by the model.
This agrees with intuition since intra-feature hash collisions effectively
act as value merging.

Inter-feature collisions, however, push the gradient in the
direction of $\boldsymbol{\theta}_2$,
which creates an opportunity for model training to mitigate their effect.
To see this, consider a situation where $\boldsymbol{\theta}_1$ and $\boldsymbol{\theta}_2$ are initialized to be orthogonal and do not change direction during training.\footnote{This restriction does not affect the representation capacity because learned embeddings can rotate around $\boldsymbol{\theta}$, i.e., for any $\boldsymbol{\theta}, \vec{e}$, and nonzero constant $\vec{u}$, we can learn some $\vec{e}'$ that satisfies $\inner{\vec{e}',\vec{u}} = \inner{\vec{e},\boldsymbol{\theta}}$.}
During SGD,
the embedding $\vec{e}_{h_1(u)}$ is a linear combination of gradients
over the training steps,
which means $\vec{e}_{h_1(u)}$ can be decomposed into a true
and intra-feature component, in the $\boldsymbol{\theta}_1$ direction, and an inter-feature component in the
$\boldsymbol{\theta}_2$ direction.
Since $\langle \boldsymbol{\theta}_1, \boldsymbol{\theta}_2 \rangle = 0$,
the projection of $\boldsymbol{\theta}_1^\top \vec{e}_{h_1(u)}$
effectively \emph{eliminates the inter-feature component}.

\paragraph{Empirical study}
These observations allow us to form two testable hypotheses.
First, we expect the projection vectors
$[\boldsymbol{\theta}_1, \boldsymbol{\theta}_2, \dots, \boldsymbol{\theta}_T]$
to orthogonalize since this minimizes the effect of inter-feature collisions
(and improves the loss).
The orthogonalization effect should be stronger for embedding tables with
fewer hash buckets since that is when inter-feature collisions are most prevalent.
Second, intra-feature and inter-feature gradient contributions
add approximately $O(N / M)$ for each of their dimensions
to the collisionless gradient since $\E[\Ind_{u,v}] = 1 / M$.
Therefore, we expect the squared embedding norms to scale roughly as $O(N / M)$.

To investigate these hypotheses, we train a single-layer neural network on the categorical features of the Criteo click-through prediction task.
We initialize $\boldsymbol{\theta} = [\boldsymbol{\theta}_1, \boldsymbol{\theta}_2, \dots, \boldsymbol{\theta}_T]$ with each $\boldsymbol{\theta}_t$ component in the same
direction (i.e., the worst-case)
to see if multiplexing encourages the weights to orthogonalize.
We plot the results in \Cref{fig:logreg_experiment},
which support our hypotheses.

\subsection{Why Can Features Share a Single Table?}
Returning to our parameter efficiency argument of \Cref{sec:dimension_reduction},
we can finally explain the full benefits of feature multiplexing.
The dimension reduction argument shows that, given a parameter budget, the multiplexed embedding has the same overall number of hash collisions as a well-tuned hash embedding.
Our gradient analysis, however, shows that not all collisions are equally problematic. Inter-feature collisions can be mitigated by a single-layer neural network because different features are processed by different model parameters. 
While our theoretical results are limited to single-layer neural networks,
we expect deeper and more complicated network architectures 
to exhibit analogs of weight orthogonalization due to their relative overparameterization.
However, we believe this is still compelling evidence
that shared embedding tables work by load balancing 
unrecoverable collisions across a larger parameter space
and effectively use one massive table for each feature.

\section{Experiments}
\label{sec:experiments}

Finally, we present experimental results on three public benchmark datasets, followed by anonymized, aggregate results from our industrial deployments across multiple web-scale systems.
We give more details in \Cref{app:experiments}, including full parameter-accuracy tradeoffs, additional experiments, and the neural network architectures used.

\subsection{Experiments with Public Benchmark Datasets}
\label{sec:public_recommender_systems_experiments}

\paragraph{Datasets} Criteo is an online advertisement dataset with \mytilde45 million examples (7 days of data).
Each example corresponds to a user interaction (click/no-click decision)
and has 39 corresponding features.
26 of these features are categorical,
and 13 are real-valued continuous variables. We only embed the categorical features.
Avazu is a similar advertisement click dataset but with \mytilde36 million examples (11 days of data). Each example here contains 23 categorical features and no continuous features. Movielens is a traditional user-item collaborative filtering problem, which we convert to a binary prediction task by assigning ``1''s to all examples with rating $\geq 3$, and ``0''s to the rest.
For more details, see the benchmarks in~\citet{zhu2022bars} for Criteo and Avazu and the review by~\citet{harper2015movielens} for Movielens.
For Criteo and Movielens, we apply the same continuous feature transformations, train-test split, and other preprocessing steps in~\citet{DBLP:conf/www/WangSCJLHC21}.
For Avazu, we use the train-test split and preprocessing steps in~\citet{song2019autoint}.

\paragraph{Experiment design} To find the Pareto frontier of the memory-AUC curve (\Cref{fig:pareto}),
we train a neural network with different embedding representation methods
at 16 logarithmically-spaced memory budgets.
Models can overfit on the second epoch,
but compressed embeddings sometimes require multiple epochs.
To showcase the best performance, we follow the guidelines of \citet{tsang2022clustering} and report the test performance of the best model found over three epochs.
As baselines, we implement collisionless embeddings, the feature hashing trick, hash embeddings, HashedNets, ROBE-Z, compositional embeddings with element-wise product (QR), and compositional embeddings with concatenation (i.e., product quantization (PQ)).

For the multiplexed version of an existing embedding scheme,
we use a single instance of the scheme to index all of the features. For example, we look up all feature values in a single multihash table, rather than using a separately-tuned multihash table for each feature.
This is equivalent to salting the vocabulary of each feature with the feature ID and merging the salted vocabularies into a single massive vocabulary
(which is then embedded as usual).
We apply this procedure to each baseline to obtain a new multiplexed algorithm, reported below the horizontal bar in \Cref{tab:academic_results}.

\paragraph{Implementation and hyperparameters} All methods are implemented in Tensorflow 2.0 using the TensorFlow Recommenders (TFRS) framework.\footnote{\url{https://github.com/tensorflow/recommenders}}
For a fair comparison, all models are identical except for the embedding component.
Following other studies~\citep{naumov2019deep,DBLP:conf/www/WangSCJLHC21}, we use the same embedding dimension for all features ($d \in \{39, 32, 30\}$ for Criteo, Avazu, and Movielens, respectively). This is followed by a stack of 1-2 DCN layers and 1-2 DNN feed-forward layers---we provide a full description of the architecture in \Cref{app:experiments}.
We run a grid search to tune all embedding algorithm hyperparameters and conduct five runs per combination.
When allocating embedding parameters to features, we size tables based on the cardinality of their vocabulary
(e.g., a feature with 10\% of the total vocabulary receives 10\% of the memory budget).

\paragraph{Results}
\Cref{tab:academic_results} shows the results for the academic datasets.
We give three points on the parameter-accuracy tradeoff and highlight the Pareto-optimal choices, 
but present the full tradeoff (including standard deviations across runs) in \Cref{app:experiments}. \Cref{fig:pareto} shows the Pareto frontier of the parameter-accuracy tradeoff for all multiplexed and non-multiplexed methods.
We observe that feature multiplexing is Pareto-optimal. Interestingly, the multiplexed hashing trick outperforms several embedding techniques (e.g., ROBE-Z) that were previously SOTA.

Collisionless embeddings are included in our benchmark evaluation
only to provide an optimistic headroom reference point.
Collisionless tables are often impossible to deploy in industrial recommendation systems since feature vocabularies can be massive and change dynamically due to churn.
Multiplexing provides a significant improvement to methods that are fast, implementable, and orders of magnitude cheaper to train and serve than collisionless embeddings.

For significance tests comparing multiplexed methods to their corresponding non-multiplexed baseline (rather than the strongest baseline), see \Cref{app:experiments}. \Cref{app:experiments} also contains additional details about Criteo and Avazu, including an explanation for the collisionless-hash embedding performance gap on Criteo.

\begin{table}[ht]
\caption{AUC of embedding methods on click-through datasets.
We give results for several points on the parameter-accuracy tradeoff, listed as sub-headings under the dataset that describe the total embedding table size.
Our multiplexed methods (listed below the bar) outperform the strongest baseline 
at \StarStar{0.01} and \Star{0.05} significance levels of Welch's $t$-test.}
\footnotesize
\centering
\begin{tabular}{l c@{\hspace{0.7\tabcolsep}}c@{\hspace{0.7\tabcolsep}}c c@{\hspace{0.7\tabcolsep}}c@{\hspace{0.7\tabcolsep}}c c@{\hspace{0.7\tabcolsep}}c@{\hspace{0.7\tabcolsep}}c}
\toprule
 & \multicolumn{3}{c}{Criteo} & \multicolumn{3}{c}{Avazu} & \multicolumn{3}{c}{Movielens-1M} \\
 Method & 25MB & 12.5MB & 2.5MB & 32.4MB & 3.24MB & 324kB & 1.6MB & 791kB & 158kB\\
\midrule
Collisionless & 80.70 & -- & -- & 77.35 & -- & -- & 88.72 & -- & --\\
\midrule
Hashing Trick & 80.21 & 79.98 & 79.44 & 77.24 & 76.71 & 75.10 & 85.37 & 83.00 & 77.07\\
Hash Embedding & 79.82 & 80.34 & 80.40 & 77.29 & 76.89 & 76.86 & 88.28 & 87.71 & 86.05\\
HashedNet & 80.54 & 80.50 & 80.42 & 77.30 & 77.19 & 76.89 & 87.92 & 87.92 & 87.31 \\
ROBE-Z Embedding & 80.55 & 80.52 & 80.41 & 77.32 & 77.18 & 76.88 & 88.05 & 87.92 & 87.40\\
PQ Embedding & 80.29 & 80.25 & 79.82 & \textbf{77.39} & 77.11 & 76.48 & 88.49 & 88.38 & 87.32\\
QR Embedding & 80.29 & 80.21 & 79.74 & 76.98 & 76.61 & 75.86 & 88.49 & 88.63 & 88.22\\
\midrule
Multiplex Hash Trick & 80.43 & 80.47 & \StarStar{80.49} & 77.35 & 77.18 & 76.86 & 87.74 & 86.93 & 82.00 \\
Multiplex HashEmb & 80.52 & 80.42 & 80.31 & 77.28 & 77.16 & 76.82 & \StarStar{88.79} & \textbf{\Star{88.74}} & 87.66 \\
Multiplex HashedNet & 80.53 & 80.55 & \StarStar{80.48} & 77.30 & \Star{77.23} & \StarStar{76.94} & 87.95 & 87.93 & 87.57 \\
Multiplex ROBE-Z & \textbf{80.57} & \textbf{80.56} & \Star{80.50} & 77.32 & \StarStar{77.24} & \StarStar{76.94} & 88.00 & 88.03 & 87.62 \\
Multiplex PQ & 80.43 & 80.49 & \textbf{\StarStar{80.54}} & \textbf{77.39} & \textbf{\StarStar{77.29}} & \textbf{\StarStar{76.96}} & \StarStar{88.81} & \Star{88.69} & 88.09\\
Multiplex QR & 80.22 & 80.28 & \StarStar{80.48} & 77.29 & 77.16 & 76.90 & \textbf{\StarStar{88.83}} & 88.53 & \textbf{\StarStar{88.27}}\\
\bottomrule
\end{tabular}
\label{tab:academic_results}
\end{table}

\begin{figure}[t]
\centering
\hspace{-0.5cm}
\includegraphics[width=0.95\textwidth]{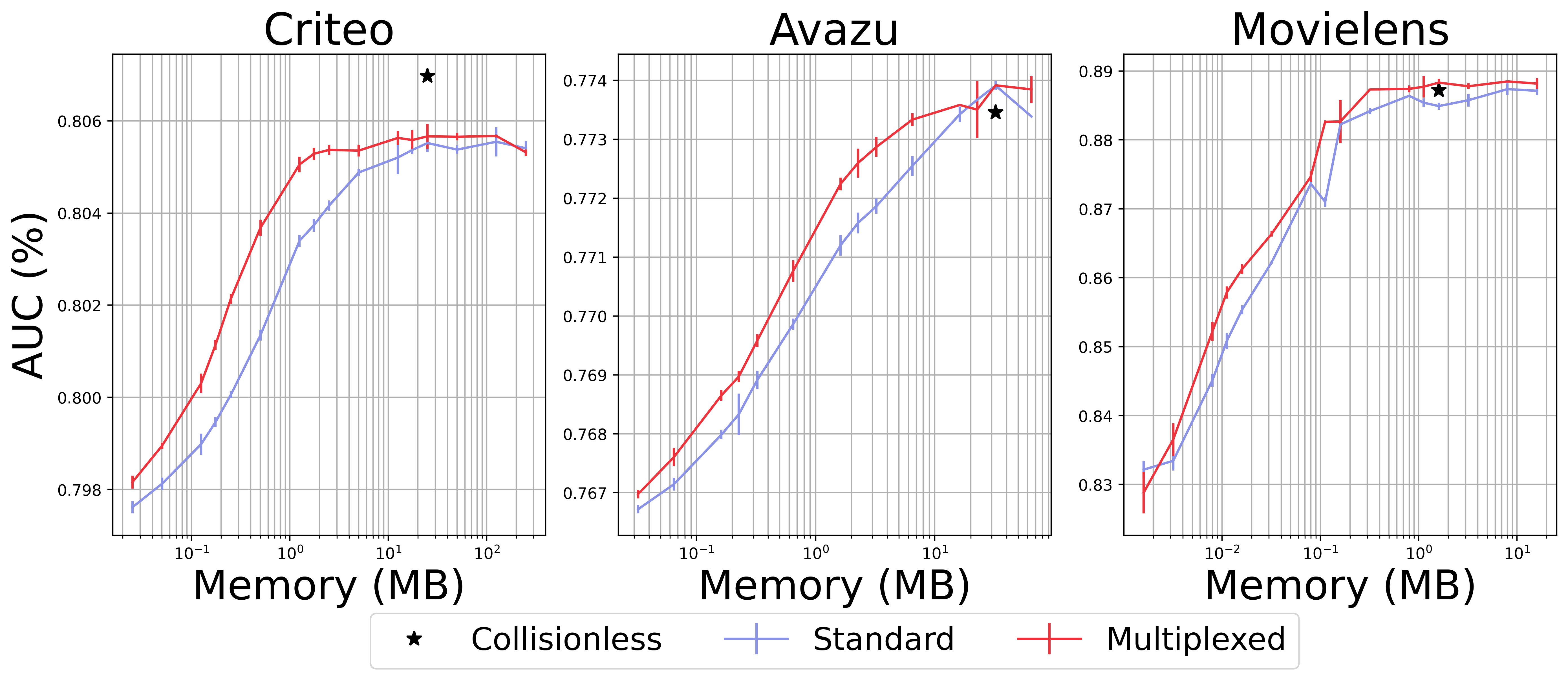}
\captionof{figure}{Pareto frontier of multiplexed and original
(non-multiplexed) methods. Top-left is better.}
\label{fig:pareto}
\vspace{-0.4cm}
\end{figure}

\subsection{Industrial Deployments of Unified Embedding in Web-Scale Systems}
\label{sec:industry_deployment}

Among the proposed approaches,
Unified Embedding (multi-probe hashing trick + multiplexing)
is uniquely positioned to address the infrastructure and latency constraints associated with training and serving web-scale systems.
We adopt Unified Embedding in practice, using variable row lookups for each feature (typically 1\mytilde6) with concatenation to achieve the desired embedding width for our features. It should be noted that this process yields a similar algorithm to Multiplex PQ---one of the top performers from \Cref{tab:academic_results}.
We discuss how to adopt Unified Embedding in production SAR systems,
where it has already served billions of users through launches in over a dozen models.
We also present an experimental study of Unified Embedding in offline and online A/B experiments.

\paragraph{Experiment and production setup} To test the robustness and generalization capability of Unified Embedding, we conduct offline and online experiments with five key models from different SAR domains, including commerce, apps, and short-form videos.
These systems represent a diverse set of production model architectures powered by state-of-the-art techniques, including two-tower retrieval models with LogQ correction~\citep{DBLP:conf/recsys/YiYHCHKZWC19}, DCN-V2~\citep{DBLP:conf/www/WangSCJLHC21}, and MMOE~\citep{DBLP:conf/recsys/ZhaoHWCNAKSYC19}.
These models handle a wide range of prediction tasks, including candidate retrieval, click-through rate (pCTR), conversion rate (pCVR) prediction~\citep{DBLP:conf/kdd/Chapelle14},
and multi-task user engagement modeling.

All of these production models are continuously trained and updated in an online streaming fashion, with the embedding tables being the dominant component (often >99\%) in terms of model parameters~\citep{DBLP:conf/recsys/AnilGHJLLP00SSY22,fahrbach2023learning}.
Many of the baseline embedding methods are simple but quite strong
(e.g., feature hashing and multiple hashing),
and have been tuned with AutoML and human heuristics~\citep{DBLP:conf/recsys/AnilGHJLLP00SSY22}.
In fact, these baseline embedding learning methods have roughly stayed the same
for the past five years,
with various efforts failing to improve them through novel techniques.

\paragraph{Practical benefits} Unified Embedding tables offer solutions to several challenging
and practical issues in large-scale embedding learning systems.

\squishlist
\item \textbf{Simple configuration}: With standard embedding methods, we need to independently tune the table size and dimension for each feature. Hyperparameter tuning is an arduous process due to the large number of categorical features used in production (hundreds if not thousands). Unified Embedding tables are far simpler to configure: the total table size is calculated based on the available memory budget,
and we only need to tune the dimension on a per-feature basis---easily
accomplished via AutoML~\citep{DBLP:conf/cvpr/BenderLCCCKL20}
or attention-based search methods~\citep{yasuda2023sequential,axiotis2023performance}.
This yields roughly a 50\% reduction in the number of hyperparameters.

\item \textbf{Adapt to dynamic feature distributions}: In practice, the vocabulary size of each feature changes over time
(new IDs enter the system on a daily basis, while stale items gradually vanish). For example, in a short-form video platform, we expect a highly dynamic video ID vocabulary with severe churn. In e-commerce, a significant number of new products are added during the holiday season compared to off-season. Thus, a fixed number of buckets for a given table can often lead to sub-optimal performance. Unified Embedding tables offer a large parameter space that is shared across all features, which can better accommodate fluctuations in feature distributions.

\item \textbf{Hardware compatibility}: Machine learning accelerators and frameworks have co-evolved with algorithms to support common use-cases, and the ``row embedding lookup'' approach has been the standard technique for the past decade.
While Unified Embedding tables are well-supported by the latest TPUs~\citep{jouppi2023tpu} and GPUs~\citep{DBLP:conf/recsys/WeiLYLLSW22, DBLP:conf/isca/MudigereHHJT0LO22},
many of the recent and novel embedding methods
require memory access patterns that are not as compatible with ML accelerators.

\squishend

\paragraph{Offline and online results}
\Cref{tab:industry_results} displays the results of our offline and online experiments for Unified Embedding. It was applied to production models with vocabulary sizes spanning from 10M to 10B. We observe substantial improvements across all models. Note that the offline metrics and the key business metrics for different models can be different, and the gains are not directly comparable.
For offline metrics, the models are evaluated using AUC (higher is better), Recall@1 (higher is better), and RMSE (lower is better).
From these experiments, we found that feature multiplexing
provides greater benefits when
(i) the vocabulary sizes for features are larger, and
(ii) the vocabularies are more dynamic (e.g., suffers from a high churn rate).
For instance, an overwhelming number of new short-form videos are produced daily, creating a large and dynamic vocabulary.

\begin{table}[h]
\caption{Unified Embedding applied to different production model architectures
and prediction tasks across various domains.
Note that +0.1\% is considered significant due to the large amount of traffic served
in online A/B experiments.
All multiplexed models are neutral or better in terms of
model size and training/serving costs.}
\footnotesize
\centering
\setlength{\tabcolsep}{4.2pt}
\begin{tabular}{lccccc}
\toprule
Task                & pCTR                       & Retrieval                  & Multi-Task Ranking               & pCVR                       & pCTR                       \\ \midrule
Domain              & Products                   & Products                   & Short-form videos         & Apps                       & Apps                       \\
Architecture        & DCN-V2                    & Two-tower                  & MMOE                      & DCN-V2                    & DCN-V2                    \\
Vocabulary Size     & \mytilde10B & \mytilde10B & \mytilde1B & \mytilde10M & \mytilde10M \\
Offline Metric & AUC +2.2\%                    & \makecell{Recall@1 +7.3\%}                    & \makecell{AUC +0.39\% (task 1)\\ RMSE -0.53\% (task 2)}                  & AUC +0.25\%                    & AUC +0.17\%                    \\
Online Metric & 
Positive                    & Positive
& +0.62\%                   & +0.44\%                    & +0.11\%                    \\
\bottomrule
\end{tabular}
\label{tab:industry_results}
\end{table}

\section{Conclusion}
In this work, we propose a highly generalizable \emph{Feature Multiplexing} framework
that allows many different features to share the same embedding space.
One of our crucial observations is that models can learn to mitigate the effects of inter-feature collisions,
enabling a single embedding vector to represent feature values from different vocabularies.
Multiplexed versions of SOTA embedding methods provide a Pareto-optimal tradeoff,
and \emph{Unified Embedding} proves to be a particularly simple yet effective approach.
We demonstrate that a Unified Embedding table performs significantly better in both offline and online experiments compared to highly-competitive baselines
across multiple web-scale ML systems.
We believe the interplay between feature multiplexing, dynamic vocabularies, power-law feature distributions,
and DNNs provides interesting and important opportunities for future work.

\paragraph{Limitations} While we expect deeper and more complicated network architectures to exhibit similar behavior, our theoretical analysis is limited to single-layer neural networks. The Pareto frontier from \Cref{sec:public_recommender_systems_experiments} is based on models that lag behind the current SOTA (we use 1-2 DCN layers + 1-2 DNN layers), though we provide evidence that feature multiplexing is also effective for SOTA models in \Cref{tab:industry_results}. Unified embeddings impose limitations on the overall model architecture, and all embedding dimensions must share a common multiple. Because of the latency involved with looking up more than 6 components, we are often limited to 5-6 discrete choices of embedding width.
Finally, unified embeddings sacrifice the ability to aggressively tune hyperparameters
on a per-feature basis in exchange for flexibility, ease-of-use, and simplified feature engineering.

\section*{Acknowledgements}
The authors would like to thank our amazing collaborators in helping develop the research, integrate our approaches into production, and reviewing our paper. Here is the list of contributors in alphabetical order by last name: Shuchao Bi, Bo Chang, Kevin Chang, Shuo Chen, Chen Cheng, Chris Collins, Evan Ettinger, Minjie Fan, Gang Fu, Xing Gao, Eu-jin Goh, Matthew Gonzalgo, Cristos Goodrow, Wenshuo Guo, Zhongze Hu, Gui Huan, Da Huang, Yanhe Huang, Samuel Ieong, Arpith Jacob, Maciej Kula, George Kurian, Duo Li, Yang Li, Yueling Li, Silva Lin, Liang Liu, Wenjing Ma, Lifeng Nai, Abdul Saleh, Mahesh Sathiamoorthy, Edward Shen, Rakesh Shivanna, Jaideep Singh, Ryan Smith, Myles Sussman, Jiaxi Tang, Alice Xu, Luning Yang, Tiansheng Yao, Wenxing Ye, Xinyang Yi, Angel Yu, David Zats, Fan Zhang, Jerry Zhang, Tianyin Zhou, Yun Zhou, Qinyun Zhu.

\bibliographystyle{abbrvnat}
\bibliography{references}

\appendix
\newpage
\section{Theory}
\label{app:analysis}

\subsection{Missing Analysis from \Cref{sec:dimension_reduction}}
\label{app:analysis_dimred}

\begin{lemma}[{\citet[Lemma 2]{weinberger2009feature}}]
\label{lem:hashing_trick}
For any $\mat{x}, \mat{y} \in \R^{n}$,
$
    \E_{h, \xi} \bracks{\inner{\phi(\mat{x}), \phi(\mat{y})}}
    =
    \inner{\mat{x}, \mat{y}}.
$
Furthermore, we have
\begin{align*}
    \textnormal{Var}_{h, \xi}\parens*{
        \inner*{\phi(\mat{x}), \phi(\mat{y})}
    }
    &=
    \frac{1}{m} \sum_{i \ne j} x_i^2 y_j^2 + x_i y_i x_j y_j \\
    &=
    \frac{1}{m}
    \parens*{
    \inner*{\mat{x}, \mat{x}} \inner*{\mat{y}, \mat{y}}
    +
    \inner*{\mat{x}, \mat{y}} \inner*{\mat{x}, \mat{y}}
    -
    2 \inner*{\mat{x} \circ \mat{y}, \mat{x} \circ \mat{y}}
    },
\end{align*}
where $\mat{x} \circ \mat{y} \in \mathbb{R}^n$ denotes the Hadamard product of
$\mat{x}$ and $\mat{y}$.
\end{lemma}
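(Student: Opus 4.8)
The plan is to compute both moments directly from the definition of $\phi = \phi_{h,\xi}$ by expanding the estimator $\inner{\phi(\mat{x}), \phi(\mat{y})}$ into a sum over pairs of coordinates and taking expectations over the independent hashes $h$ and $\xi$. Writing $\phi(\mat{x})_k = \sum_{i \,:\, h(i) = k} \xi(i) x_i$ and summing the products over buckets $k \in [m]$, I would first establish the identity
\[
  \inner{\phi(\mat{x}), \phi(\mat{y})}
  = \sum_{i,j} \xi(i)\xi(j)\, x_i y_j\, \Ind_{i,j}
  = \inner{\mat{x},\mat{y}} + \sum_{i \ne j} \xi(i)\xi(j)\, x_i y_j\, \Ind_{i,j},
\]
where $\Ind_{i,j} = \Ind[h(i)=h(j)]$ records a collision, and the diagonal $i = j$ collapses to $\inner{\mat{x},\mat{y}}$ using $\xi(i)^2 = 1$ and $\Ind_{i,i}=1$. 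This single identity drives both parts of the lemma.

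For the mean, I would take expectations of the off-diagonal sum. Since $h$ and $\xi$ are independent and the sign hash has independent zero-mean entries, $\E[\xi(i)\xi(j)] = 0$ for $i \ne j$, so every cross term vanishes and $\E[\inner{\phi(\mat{x}),\phi(\mat{y})}] = \inner{\mat{x},\mat{y}}$. For the variance I would set $Z = \inner{\phi(\mat{x}),\phi(\mat{y})} - \inner{\mat{x},\mat{y}}$ and compute $\mathrm{Var} = \E[Z^2]$ by squaring the off-diagonal sum, producing a quadruple sum over $(i,j,k,l)$ with $i \ne j$ and $k \ne l$, weighted by $\E[\xi(i)\xi(j)\xi(k)\xi(l)]\,\E[\Ind_{i,j}\Ind_{k,l}]$.

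The crux, and the step demanding the most care, is the combinatorial bookkeeping of which sign monomials survive. Because the $\xi$-values are independent and zero-mean, $\E[\xi(i)\xi(j)\xi(k)\xi(l)] = 1$ only when the four indices pair up perfectly (every index of even multiplicity); any configuration with three or four distinct indices contributes $0$. Given $i \ne j$ and $k \ne l$, perfect pairing forces $\{i,j\} = \{k,l\}$, i.e. $(k,l) = (i,j)$ or $(k,l) = (j,i)$. The first case yields the term $x_i^2 y_j^2$ and the second yields $x_i y_i x_j y_j$; in both the two collision indicators collapse to a single $\Ind_{i,j}$, whose expectation is $\Pr[h(i)=h(j)] = 1/m$ by $2$-universality. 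This is the only place where genuine care is needed—the rest is routine expansion and algebra.

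Collecting the surviving terms gives $\mathrm{Var} = \frac{1}{m}\sum_{i\ne j}\parens{x_i^2 y_j^2 + x_i y_i x_j y_j}$, which is the first displayed line. To reach the closed form I would complete each off-diagonal sum to a full sum by adding and subtracting the diagonal, using $\sum_{i \ne j} x_i^2 y_j^2 = \inner{\mat{x},\mat{x}}\inner{\mat{y},\mat{y}} - \inner{\mat{x}\circ\mat{y},\mat{x}\circ\mat{y}}$ and $\sum_{i \ne j} x_i y_i x_j y_j = \inner{\mat{x},\mat{y}}^2 - \inner{\mat{x}\circ\mat{y},\mat{x}\circ\mat{y}}$, whose sum divided by $m$ is exactly the stated expression, the Hadamard correction $-2\inner{\mat{x}\circ\mat{y},\mat{x}\circ\mat{y}}$ accounting precisely for the removed diagonal. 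This matches the second displayed line and completes the proof.
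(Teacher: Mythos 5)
Your proof is correct, but there is nothing in the paper to compare it against: the paper does not prove this lemma at all, it imports it wholesale as Lemma~2 of \citet{weinberger2009feature} and only invokes it as a black box in the proof of \Cref{prop:feature_hash_moments}. Your argument --- expanding the estimator as $\inner{\phi(\mat{x}),\phi(\mat{y})} = \inner{\mat{x},\mat{y}} + \sum_{i \ne j}\xi(i)\xi(j)\,x_i y_j\,\Ind_{i,j}$, killing the off-diagonal terms in expectation, and then observing that in the squared sum only the pairings $(k,l)=(i,j)$ and $(k,l)=(j,i)$ survive, contributing $x_i^2 y_j^2$ and $x_i y_i x_j y_j$ respectively with collision probability $1/m$ --- is the standard direct moment computation, essentially the one carried out in the cited reference. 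Your closing algebra (completing the two off-diagonal sums to $\inner{\mat{x},\mat{x}}\inner{\mat{y},\mat{y}}$ and $\inner{\mat{x},\mat{y}}^2$ at the cost of subtracting $2\inner{\mat{x}\circ\mat{y},\mat{x}\circ\mat{y}}$ for the removed diagonals) is also right, and that identity is precisely what the paper later exploits when it specializes to binary vectors in \Cref{prop:feature_hash_moments}.

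One caveat you should make explicit: the pairing step uses more randomness than the lemma statement literally provides. To conclude that $\E[\xi(i)\xi(j)\xi(k)\xi(l)] = 0$ whenever some index appears an odd number of times, you need the sign hash to be (at least) 4-wise independent --- pairwise independence controls only second moments, and configurations with four distinct indices require the fourth mixed moment to factor. Likewise, obtaining $\E[\Ind_{i,j}] = 1/m$ exactly, rather than the inequality $\le 1/m$ that weak 2-universality guarantees, requires the bucket hash to be pairwise independent (uniform over buckets). These hypotheses are implicit in \citet{weinberger2009feature} and inherited silently by the paper; since your derivation is where they are actually used, they belong among your stated assumptions.
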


\PropFeatureHashingMoments*

\begin{proof}
The results for $\mu_U$, $\mu_H$ and $\sigma_U^2$ directly follow from
\Cref{lem:hashing_trick},
and observing that
\[
    \inner*{\mat{x} \circ \mat{y}, \mat{x} \circ \mat{y}}
    =
    \inner*{\mat{x} \circ \mat{x}, \mat{y} \circ \mat{y}}
    =
    \inner*{\mat{x}, \mat{y}},
\]
since $\mat{x}, \mat{y} \in \{0,1\}^{N_1 + N_2}$.

To compute $\sigma_H^2$, first decompose the inner product
\[
    \langle \phi(\vec{x}), \phi(\vec{y}) \rangle
    =
    \langle \phi_{h_1, \xi_1}(\vec{x}_1), \phi_{h_1, \xi_1}(\vec{y}_1) \rangle
    +
    \langle \phi_{h_2, \xi_2}(\vec{x}_2), \phi_{h_2, \xi_2}(\vec{y}_2) \rangle.
\]
Since the hash functions $h_1$, $\xi_1$, $h_2$, $\xi_2$ are independent,
it follows that $\sigma_H^2$ is the sum of the variances of
$\langle \phi_{h_1, \xi_1}(\vec{x}_1), \phi_{h_1, \xi_1}(\vec{y}_1) \rangle$
and
$\langle \phi_{h_2, \xi_2}(\vec{x}_2), \phi_{h_2, \xi_2}(\vec{y}_2) \rangle$,
so we can again invoke \Cref{lem:hashing_trick}.
\end{proof}

\subsection{Derivation of Gradients}
\label{app:analysis_grads}

For easy reference, we re-introduce our logistic regression setting here. In \Cref{fig:logistic_regression_model}, we reproduce part of \Cref{fig:logreg_experiment} from the main text to further help clarify our notation.

Recall from the main text that we are interested in a logistic regression model with trainable embeddings for binary classification
(i.e., a single-layer neural network with hashed one-hot encodings as input).
This corresponds to our supervised learning framework (Eq.~\Cref{eq:supervised_embedding_learning}), where $f(\vec{z}; \boldsymbol{\theta})$ is the sigmoid function $\sigma_{\boldsymbol{\theta}}(\vec{z}) = 1 / (1 + \exp(-\langle \vec{z}, \boldsymbol{\theta}\rangle))$ and $\loss$ is the binary cross-entropy loss.

We concatenate the embeddings of each feature,
which means that the input to $f(\mat{z};\boldsymbol{\theta})$
is
$\mat{z} = g(\vec{x}; \mat{E})
=
[\vec{e}_{h_{1}(\vec{x}_1)}, \vec{e}_{h_{2}(\vec{x}_2)}, \dots, \vec{e}_{h_{T}(\vec{x}_T)}]$,
where $\vec{e}_{h_{t}(\vec{x}_t)}$ is the embedding for the $t$-th
feature value in example $\vec{x}$.

We also partition the logistic regression weights
$\boldsymbol{\theta} = [\boldsymbol{\theta}_1, \boldsymbol{\theta}_2, \dots, \boldsymbol{\theta}_T]$,
so that embedding $\vec{e}_{h_{t}(\vec{x}_t)}$ for feature $t$
is projected via $\boldsymbol{\theta}_t \in \mathbb{R}^{M}$.

For notational convenience we also partition the dataset by label.
We write $D_0 = \{(\vec{x}_i, y_i) \in D : y_i = 0\}$
and $D_1 = \{(\vec{x}_i, y_i) \in D: y_i = 1\}$. Using this notation, we can write the logistic regression objective as follows:
\begin{equation*}
    \lsum_D(\mat{E},\boldsymbol{\theta})
    =
    - \sum_{(\vec{x},y) \in D_0} \log\left(\frac{1}{1 + \exp(-\langle \boldsymbol{\theta}, g(\vec{x};\mat{E})\rangle)}\right)
    - \sum_{(\vec{x},y) \in D_1} \log\left(\frac{1}{1 + \exp(\langle \boldsymbol{\theta}, g(\vec{x}; \mat{E})\rangle)}\right).
\end{equation*}

We proceed by writing the loss as a sum over vocabulary pairs rather than examples.
We consider the problem for two categorical features with vocabularies
$V_1$ and $V_2$ to simplify the presentation, and we use the notation $C_{u,v,0}$ to denote the number of co-occurences of
$(u,v) \in V_1 \times V_2$ in dataset $D_0$. Likewise,
$C_{u,v,1}$ denotes the corresponding quantity for $D_1$.
We also let
$\vec{e}_{u,v} = [\vec{e}_{h_1(u)}, \vec{e}_{h_2(v)}]$
so that the inner product
$\langle [\boldsymbol{\theta}_1, \boldsymbol{\theta}_2], [\vec{e}_{h_1(u)}, \vec{e}_{h_2(v)}]\rangle$
can be written concisely as $\boldsymbol{\theta}^{\top} \vec{e}_{u,v}$.
It follows that
\begin{equation*}
    \lsum_D(\mat{E},\boldsymbol{\theta}) = - \sum_{u \in V_1} \sum_{v \in V_2} C_{u,v,0} \log\left(\frac{1}{1 + \exp(-\boldsymbol{\theta}^{\top}\vec{e}_{u,v})}\right)
    + C_{u,v,1} \log\left(\frac{1}{1 + \exp(\boldsymbol{\theta}^{\top}\vec{e}_{u,v})}\right).
\end{equation*}

After combining the sigmoid functions, we get the following expression:
\begin{equation*}
    \lsum_D(\mat{E},\boldsymbol{\theta})
    = - \sum_{u \in V_1} \sum_{v \in V_2} C_{u,v,0} \log\exp(\boldsymbol{\theta}^{\top}\vec{e}_{u,v}) - (C_{u,v,0}+C_{u,v,1})\log( 1 + \exp(\boldsymbol{\theta}^{\top}\vec{e}_{u,v})).
\end{equation*}

\begin{figure}[t]
\centering
\includegraphics[width=3in]{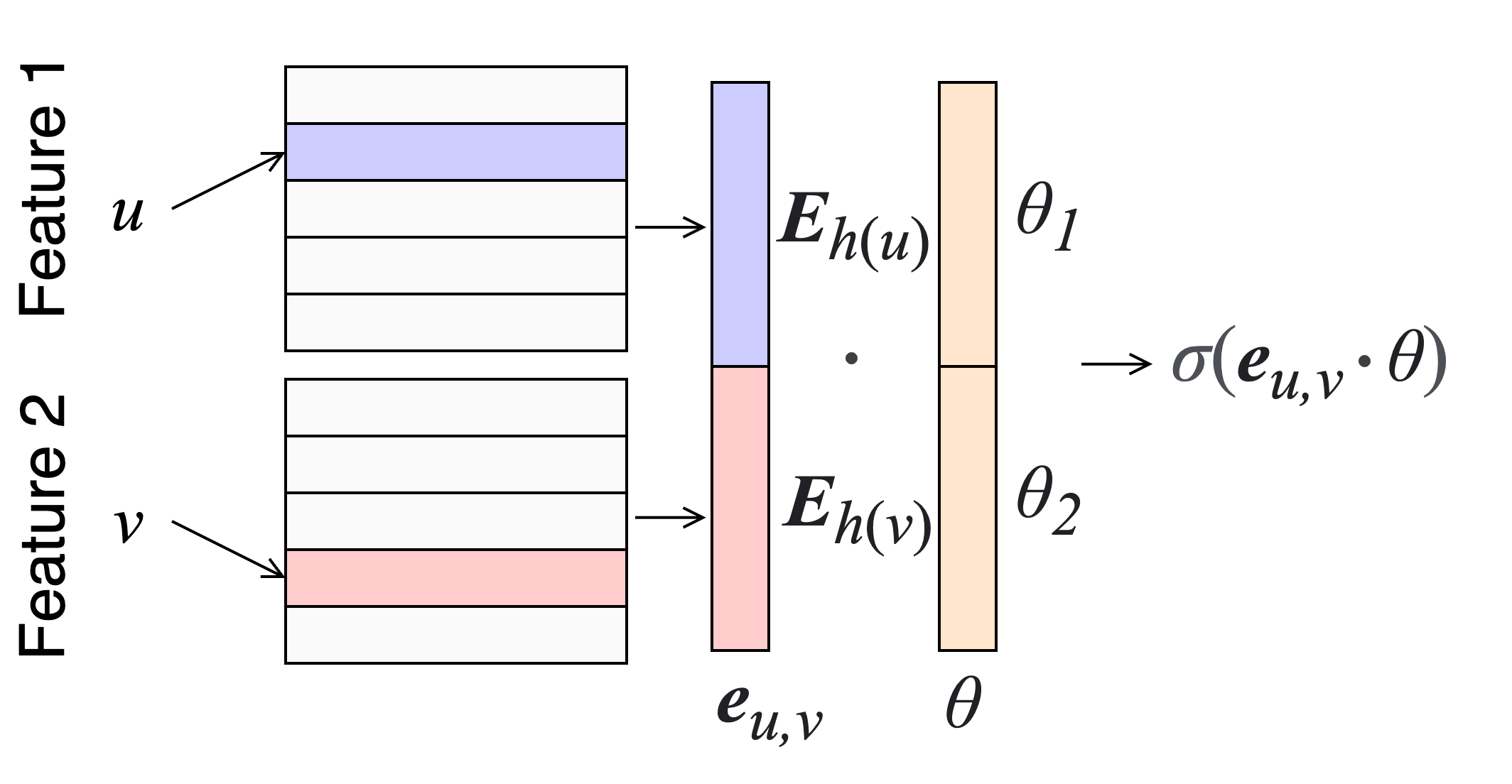}

\captionof{figure}{Illustration of the logistic regression model, annotated with notation that corresponds to our theoretical analysis.}
\label{fig:logistic_regression_model}
\end{figure}

To get the gradients with respect to $\mat{E}_{h(u)}$ (the embedding parameters used to represent value $u$), we first take the gradient with respect to $\mat{E}_m$. These parameters accumulate gradient contributions from any value assigned to bucket $m$ (i.e. for which $\Ind_{u,m}$ is one). It is then straightforward to express $\nabla_{\mat{E}_{h(u)}} \lsum_D(\mat{E},\theta)$ in terms of the other values $v$ that collide with $u$ (i.e. for which $\Ind_{u,v}$ is one).

\paragraph{Collisionless Embeddings:} Because each row $\mat{E}_m$ corresponds to a single value, we can take the gradient with respect to $\mat{E}_{h(u)}$ directly.
\begin{equation*}
    \nabla_{\mat{E}_{h(u)}} \lsum_D(\mat{E},\theta) = \theta_1 \sum_{v \in V_2} C_{u,v,0} - (C_{u,v,0}+C_{u,v,1}) \sigma_\theta(\vec{e}_{u,v})
\end{equation*}

\paragraph{Hash Embeddings:} Here, row $m$ corresponds to any value that is assigned to bucket $m$ (i.e. for which $\Ind_{u,m}$ is one). We find the gradient for the embedding table corresponding to the first feature $V_1$.
\begin{equation*}
    \nabla_{\mat{E}_m} \lsum_D(\mat{E},\theta) = \sum_{u \in V_1} \Ind_{u,m} \theta_1 \sum_{v \in V_2} C_{u,v,0} - (C_{u,v,0}+C_{u,v,1}) \sigma_\theta(\vec{e}_{u,v})
\end{equation*}
The interpretation of this equation is that with hash collisions, the gradient now contains contributions from any value assigned to the bucket. If we consider the bucket selected by a specfic value $u \in V_1$, we can write this gradient as follows.
\begin{align*}
    \nabla_{\mat{E}_{h(u)}} \lsum_D(\mat{E},\theta) &= \theta_1 \sum_{v \in V_2} C_{u,v,0} - (C_{u,v,0}+C_{u,v,1}) \sigma_\theta(\vec{e}_{u,v}) \\
    &+\theta_1 \sum_{\substack{w \in V_1 \\ w \neq u }} \Ind_{u,w} \sum_{v \in V_2} C_{u,w,0} - (C_{u,w,0}+C_{u,w,1}) \sigma_\theta(\vec{e}_{u,v})
\end{align*}

\paragraph{Unified Embeddings:} Here, row $m$ might correspond to any value -- even ones from other features. The gradient with respect to $\mat{E}_m$ now contains additional components.
\begin{align*}
    \nabla_{\mat{E}_m} \lsum_D(\mat{E},\theta) &= \sum_{u \in V_1} \Ind_{u,m} \theta_1 \sum_{v \in V_2} C_{u,v,0} - (C_{u,v,0}+C_{u,v,1}) \sigma_\theta(\vec{e}_{u,v}) \\
    &+ \sum_{v \in V_2} \Ind_{v,m} \theta_2 \sum_{u \in V_1} C_{u,v,0} - (C_{u,v,0}+C_{u,v,1}) \sigma_\theta(\vec{e}_{u,v})
\end{align*}

Rewriting the gradient to be in terms of $\vec{E}_{h(u)}$ (for $u \in V_1$) reveals that in the unified embedding case, the gradient contains the collisionless component, an intra-feature component, and an inter-feature component:
\begin{align*}
    \nabla_{\mat{E}_{h(u)}} \lsum_D(\mat{E},\theta) &= \theta_1 \sum_{v \in V_2} C_{u,v,0} - (C_{u,v,0}+C_{u,v,1}) \sigma_\theta(\vec{e}_{u,v}) \\
    &+\theta_1 \sum_{\substack{w \in V_1 \\ w \neq u }} \Ind_{u,w} \sum_{v \in V_2} C_{w,v,0} - (C_{w,v,0}+C_{w,v,1}) \sigma_\theta(\vec{e}_{u,v}) \\
    &+ \theta_2 \sum_{v \in V_2} \Ind_{u,v} \sum_{w \in V_1} C_{w,v,0} - (C_{w,v,0}+C_{w,v,1}) \sigma_\theta(\vec{e}_{w,u}).
\end{align*}
\section{Experiments}
\label{app:experiments}

\begin{figure}[t]
\centering
\includegraphics[width=5.46in]{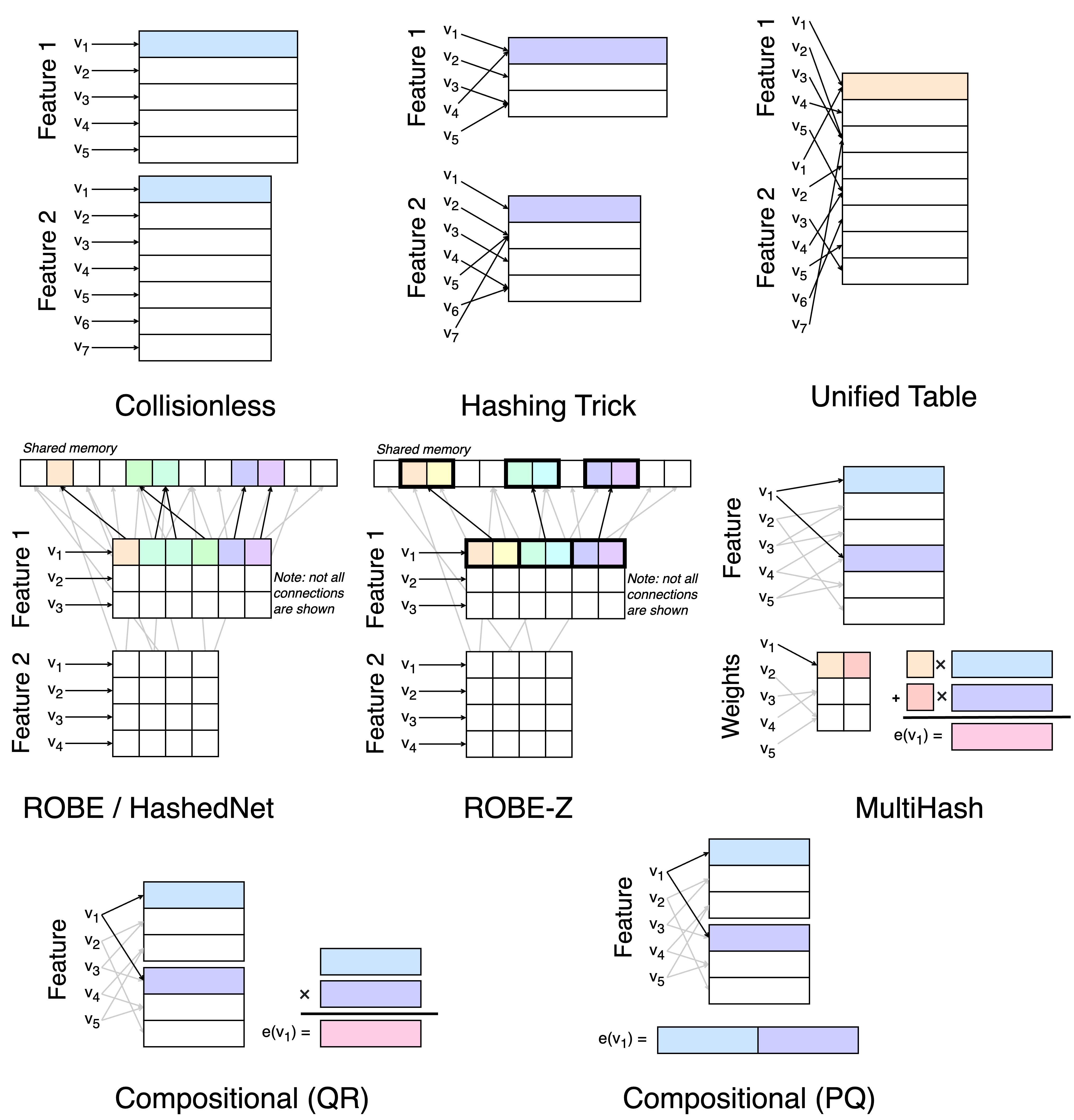}
\captionof{figure}{Illustration of baseline embedding methods, all which are compatible with feature multiplexing.}
\label{fig:baseline_graphic}
\end{figure}

\Cref{fig:baseline_graphic} illustrates our baselines. We describe the methods and their hyperparameters below.
\begin{enumerate}
    \item Collisionless tables assign each value of each feature a unique $d$-dimensional representation. The only hyperparameter is $d$, the embedding dimension.
    \item Hashing trick tables begin with a fixed number of representations and allow values to collide within features. Here, each table has two hyperparameters: $d$ (the dimension) and $M$ (the table size). The inspiration for this method is most commonly attributed to~\citet{weinberger2009feature}, and it is widely used across industry.
    \item Unified embeddings use a similar process but allow values to collide within and across features. Unified embeddings have the same hyperparameters as hash tables, but there is now only one table size to tune. 
    \item Compositional QR tables use multiple small tables for each feature, then perform element-wise multiplication of the resulting embeddings. The hyperparameters are $k$ (the number of tables), $M$ (the table size), and $d$ (the embedding dimension). This method corresponds to the compositional embedding of~\citet{shi2020compositional}, using the ``concatenation'' aggregation strategy. 
    \item Compositional PQ tables use multiple small tables for each feature, then perform concatenation. We call this ``product quantization'' because it is similar to the vector compression technique. The hyperparameters are the same as for QR tables. This method corresponds to the compositional embedding of~\citet{shi2020compositional}, using the ``element-wise product'' aggregation strategy.
    \item HashedNet tables look up each dimension of each embedding separately in a flat, shared memory. The only extra hyperparameter is the size of the shared parameter array. This is the method of~\citet{chen2015compressing}, but with the HashedNet algorithm applied only to the embedding tables.
    \item ROBE-Z tables look up contiguous blocks of $Z$ dimensions in shared memory
    (we use $Z=2$ in \Cref{fig:baseline_graphic}).
    ROBE-Z has two extra hyperparameters: the size of the shared parameter array, and the number of blocks. Note that one can use a different $Z$ for each block. For simplicity, we do not do this. This is the method used by~\citet{desai2022random} and by~\citet{desai2022the}.
    \item Hash embedding tables (or \textit{multihash}, in \Cref{fig:baseline_graphic}) perform multiple lookups into the same table, then combine the embeddings using a learned weighted combination. Multihash tables have several hyperparameters and are therefore trickier to tune well. The parameters are $M_1$, the size of the embedding table, $M_2$, the size of the importance weight table, $k$, the number of lookups and the dimension of the importance weight table, and $d$, the embedding dimension. This method was proposed by~\citet{tsang2022clustering}.
\end{enumerate}

\paragraph{Hyperparameter tuning:} For all methods, we consider sixteen logarithmically-spaced memory budgets: [0.001, 0.002, 0.005, 0.007, 0.01, 0.02, 0.05, 0.07, 0.1, 0.2, 0.5, 0.7, 1.0, 2.0, 5.0, 10.0] times the memory required for the collisionless table. We conduct 5 independent training runs for each hyperparameter configuration. For non-multiplexed representations, we allocate parameters to features based on vocabulary size. For example, a feature with 10\% of the total vocabulary will be allocated 10\% of the total parameter budget. 
We also tuned the network architecture and optimizer parameters. 
To avoid biasing our results in favor of any one method, we only used collisionless tables to tune these parameters.
The embedding hyperparameters were tuned using grid search over the following choices.
\squishlist
\item \textbf{Hash embeddings}: For hash embeddings (and multiplexed hash embeddings), we chose the number of lookups $k$ from [2, 3]. We also introduce an allocation parameter $p$ that describes what fraction of the memory budget is used for the embedding table (e.g., $M_1$) versus the importance table (e.g., $M_2$). For example, $p = 0.2$ means that the embedding table are allocated 80\% of the budget and the importance weight tables receives the remaining 20\%. We chose $p$ from [0.05, 0.1, 0.2].
\item \textbf{ROBE-Z}: We chose the number of lookups from [2, 4, 8, 16]. This corresponds to different values of $Z$ depending on the embedding dimension of the dataset.
\item \textbf{PQ embeddings}: For PQ embeddings (and multiplexed PQ), we selected the number of lookups $k$ from [2, 3, 4, 8, 16]. We note that this parameter affects performance in different ways depending on the parameter budget -- smaller budgets require a greater number of lookups for performance.
\item \textbf{QR embeddings}: For QR embeddings (and multiplexed QR), we selected the number of components $k$ from [2, 3, 4].
\item \textbf{Other methods}: Our other methods did not require search over extra hyperparameters. For the special case of collisionless embeddings, we simply ran 5 replicates at a single parameter budget (1.0).
\squishend

Taken together, we conducted 3205 training runs for each dataset to produce \Cref{fig:pareto}. Note that in some cases (e.g., Avazu with a budget of $5, 10$), some runs with expensive configurations did not finish---we report only those results that completed training within 2 days. Finally, the total number of training runs (across all datasets) is somewhat higher, to account for our preliminary tuning over the architecture and optimizer parameters. We estimate that roughly 10,000 training runs were performed in total.

\paragraph{Criteo:} We use the 7-day version of Criteo from the Kaggle competition. It is widely-known that additional filtering / merging of the vocabulary is necessary to achieve SOTA results on Criteo. We merge the least-frequent values, trimming the vocabulary down to the size listed in \Cref{tab:criteo_vocab_trim}. For the network, we used a stack of two DCN cross layers followed by a 2-layer neural network with 748 nodes in each layer. This model gives a realistic picture of the tradeoff because near-SOTA performance on Criteo can be achieved with a well-tuned 3-layer feedforward network with 748 nodes and collisionless embeddings~\citep{DBLP:conf/www/WangSCJLHC21}. We trained with a batch size of 512 for 300K steps using Adam with a learning rate of 0.0002. The full parameter-accuracy tradeoffs are presented in \Cref{fig:app_criteo_all}.

\paragraph{Avazu:} We use the version of Avazu from the Kaggle competition, but we use a different train-test split than the original. Following the practice of ~\citet{song2019autoint}, we combine the train and test sets and do a 90-10 split after shuffling. We also apply vocabulary pruning to this dataset, with the results in \Cref{tab:avazu_vocab_trim}. Note that we mod the ``hour'' feature by $24$ and do not use the ``id'' feature. For the neural network, we use a stack of one DCN cross layer followed by two fully connected feedforward layers of 512 nodes each. We trained with a batch size of 512 for 300K steps using Adam with a learning rate of 0.0002. The full parameter-accuracy tradeoffs are presented in \Cref{fig:app_avazu_all}.

\paragraph{Movielens:} We use the Movielens-1M dataset with the same preprocessing used by~\citet{DBLP:conf/www/WangSCJLHC21}. We only use the ``userId,'' ``movieId,'' ``zipcode,'' ``age,'' ``occupation'' and ``gender'' features. We always use a collisionless table to embed the ``gender'' feature due to small vocabulary size. We used a network composed of a single DCN cross layer followed by a single fully-connected layer with 192 nodes. We trained with a batch size of 128 for 300K steps using Adam with a learning rate of 0.0002. The full parameter-accuracy tradeoffs are presented in \Cref{fig:app_movielens_all}.

\paragraph{Vocabulary distributions:} \Cref{fig:vocab_sizes} presents the vocabulary distribution for each of our datasets. Criteo differs from Avazu and Movielens in terms of vocabulary distribution - it has a heavier-tailed vocabulary than Avazu or Movielens. This worsens the errors introduced by hash collisions because it is more likely for a colliding token to overwrite the shared embedding (all collisions are between heavy hitters). This is likely the cause of the performance gap between collisionless embeddings (where hash collisions do not occur) and the other methods on Criteo (\Cref{fig:pareto}).

\paragraph{Statistical significance tests:}
We conducted additional statistical significance testing on \Cref{tab:academic_results}. When we compare the multiplexed methods against the corresponding non-multiplexed baseline (rather than the strongest baseline), the following differences are significant with $p < 0.05$ (Welch's $t$-test).

\begin{enumerate}
\item \textbf{Criteo:} All results in the 2.5 MB column. Multiplex Hashing Trick and Multiplex PQ in the 12.5 MB column. Multiplex Hashing Trick, Multiplex HashEmb, Multiplex PQ, and Multiplex QR in the 25MB column.
\item \textbf{Avazu:} All results in the 324 kB and 3.24 MB columns. Multiplex QR in the 32.4 MB column.
\item \textbf{Movielens:} All results in the 158 kB column. All results in the 791 kB column, except for Multiplex ROBE-Z. Multiplex HashEmb, Multiplex PQ, and Multiplex QR in the 1.6 MB column.
\end{enumerate}

\paragraph{Hardware and training time:} We used CPU training for our academic experiments. End-to-end CPU training on a shared cluster took approximately 3-4 hours for Criteo, 4-5 hours for Avazu, and 30 minutes for Movielens. The choice of embedding method did not significantly affect the training time. 

In the \Cref{tab:criteo_mst}, we report the average number of steps / second for several methods on Criteo for the 25 MB table size with CPU training (higher is better). This is representative of the total training time because the number of steps to convergence is fairly stable across methods (250k batches of size 512 for Criteo, 200K of size 512 for Avazu, and 50K of size 128 for Movielens). Note that because our shared cluster has large variations in job load / demand, these numbers have high variance ($\sigma > 5$).
However, our results still support the conclusion that multiplexing has a minimal effect on training time. We observed similar behavior in large-scale industrial experiments---multiplexing did not significantly increase the training or inference time.

\begin{table}[t]
\caption{Criteo training time (steps / second).}
\centering
\begin{tabular}{l c c}
\toprule
 Method & Standard & Multiplexed\\
\midrule
Collisionless & 32.9 & NA\\
Hashing Trick & 29.2 & 29.4 \\
(Multiple) Hash Embedding & 33.5 & 31.3 \\
HashedNet & 44.1 & 39.8 \\
ROBE-Z Embedding & 30.0 & 34.4\\
PQ Embedding & 37.6 & 40.3\\
QR Embedding & 37.4 & 31.3 \\
\bottomrule
\end{tabular}
\label{tab:criteo_mst}
\vspace{0.5cm}
\end{table}

\begin{figure}[t]
\centering
\vspace{-0.4cm}
\includegraphics[width=0.90\textwidth]{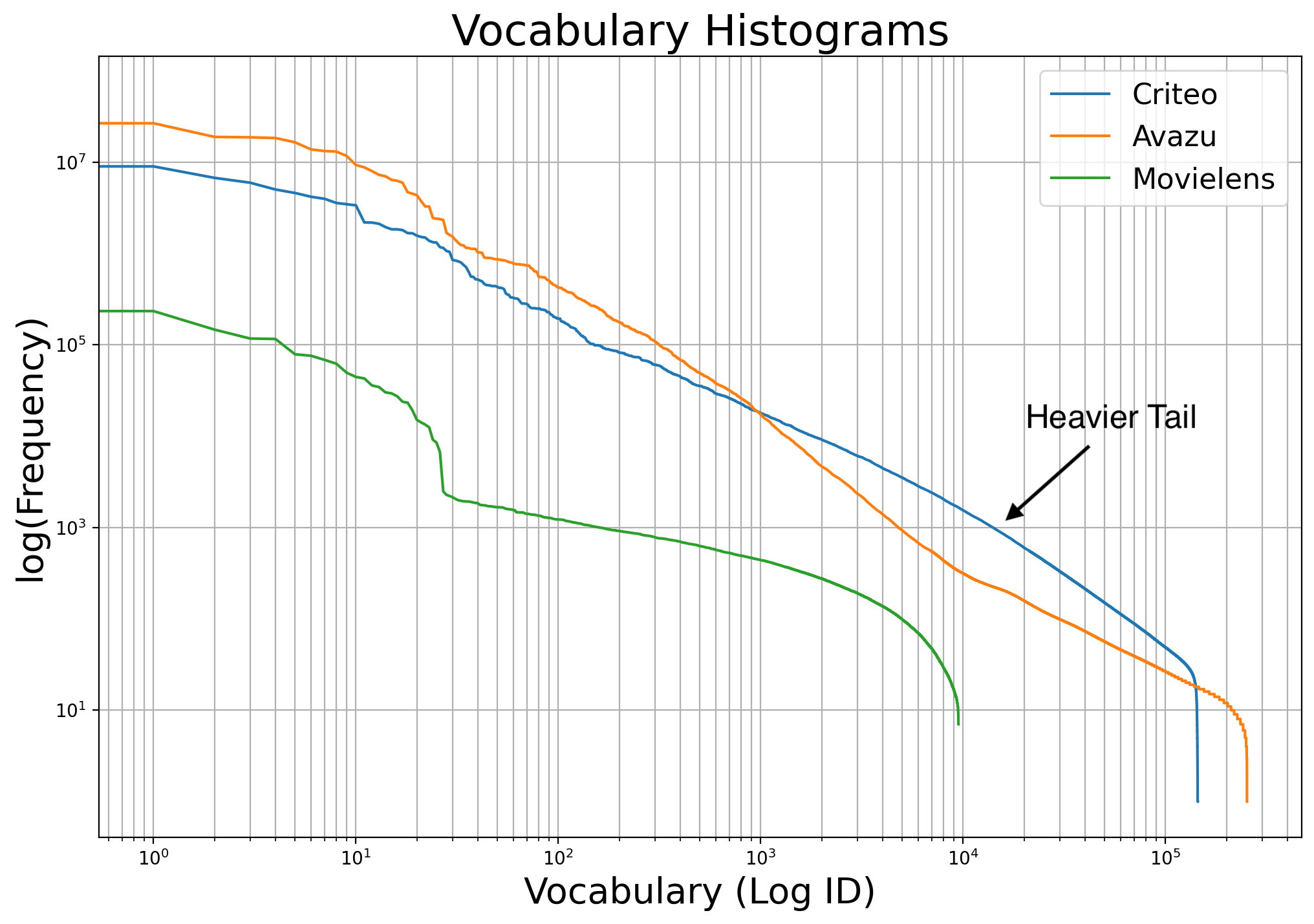}
\captionof{figure}{Vocabulary distribution for all categorical feature values (merged) in Criteo, Avazu, and Movielens. All of these datasets display power law behavior, but Criteo has a much heavier tail.}
\label{fig:vocab_sizes}
\vspace{-0.3cm}
\end{figure}

\begin{figure}[t]
\centering
\includegraphics[width=0.31\textwidth]{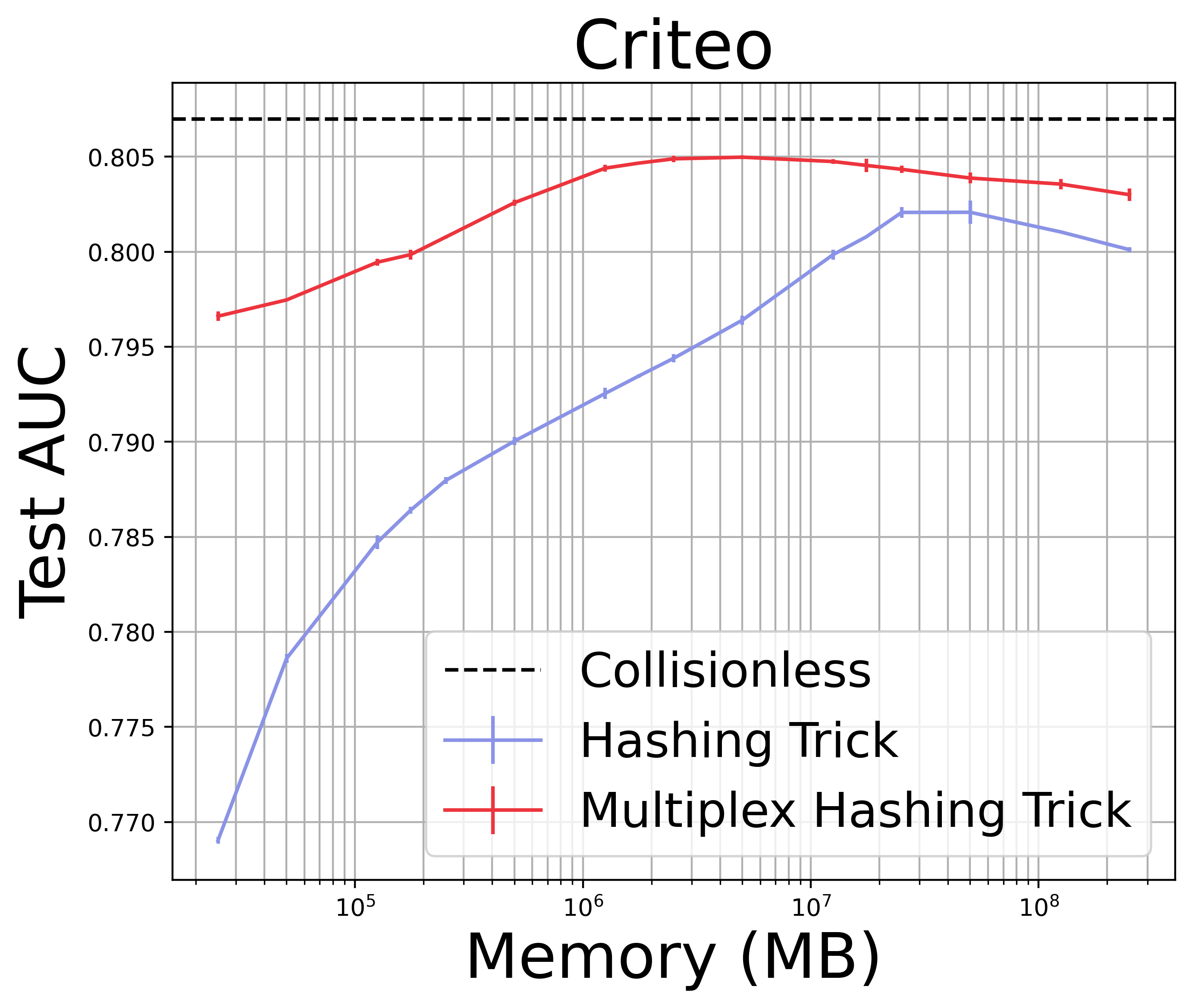}
\includegraphics[width=0.31\textwidth]{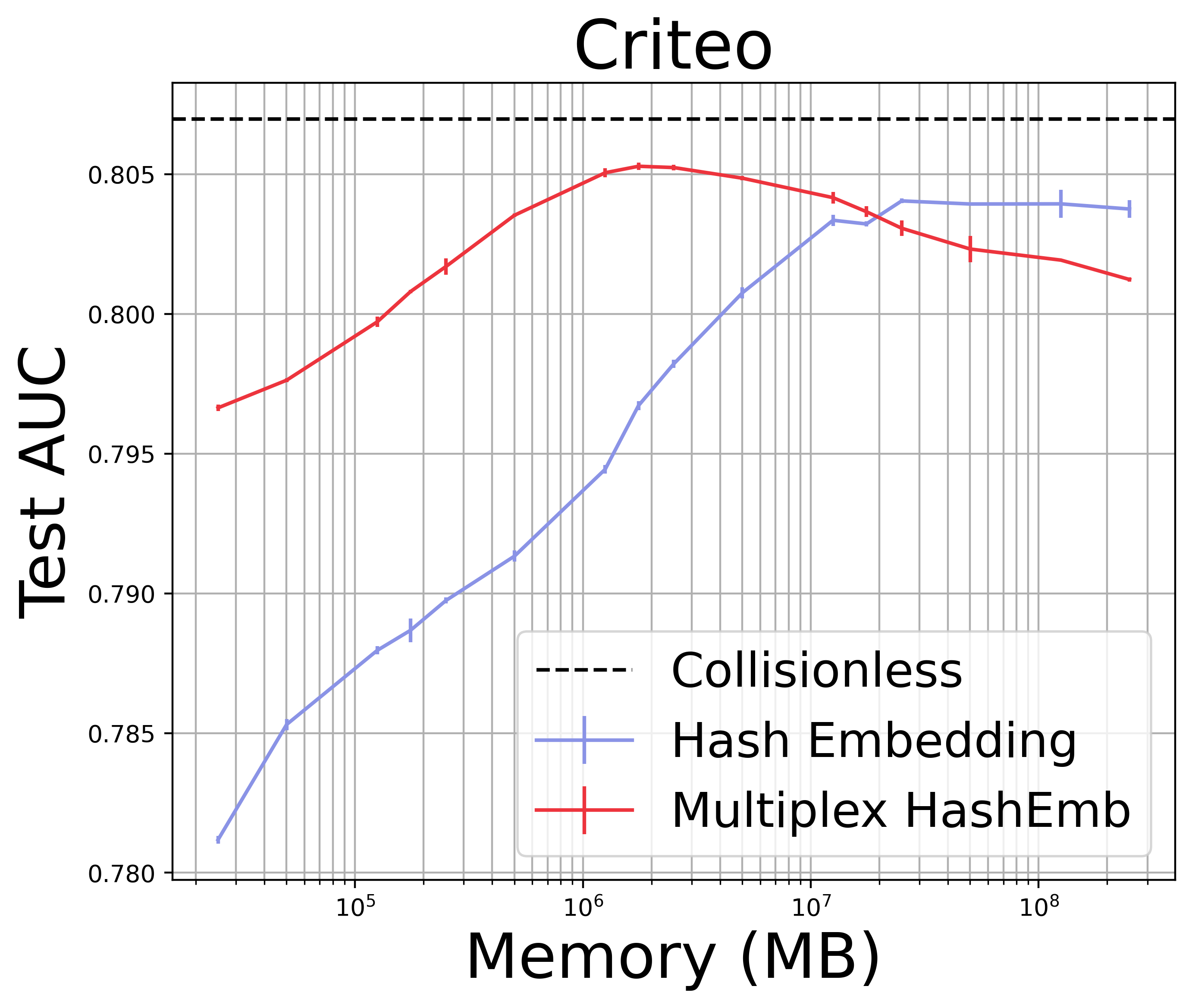}
\includegraphics[width=0.31\textwidth]{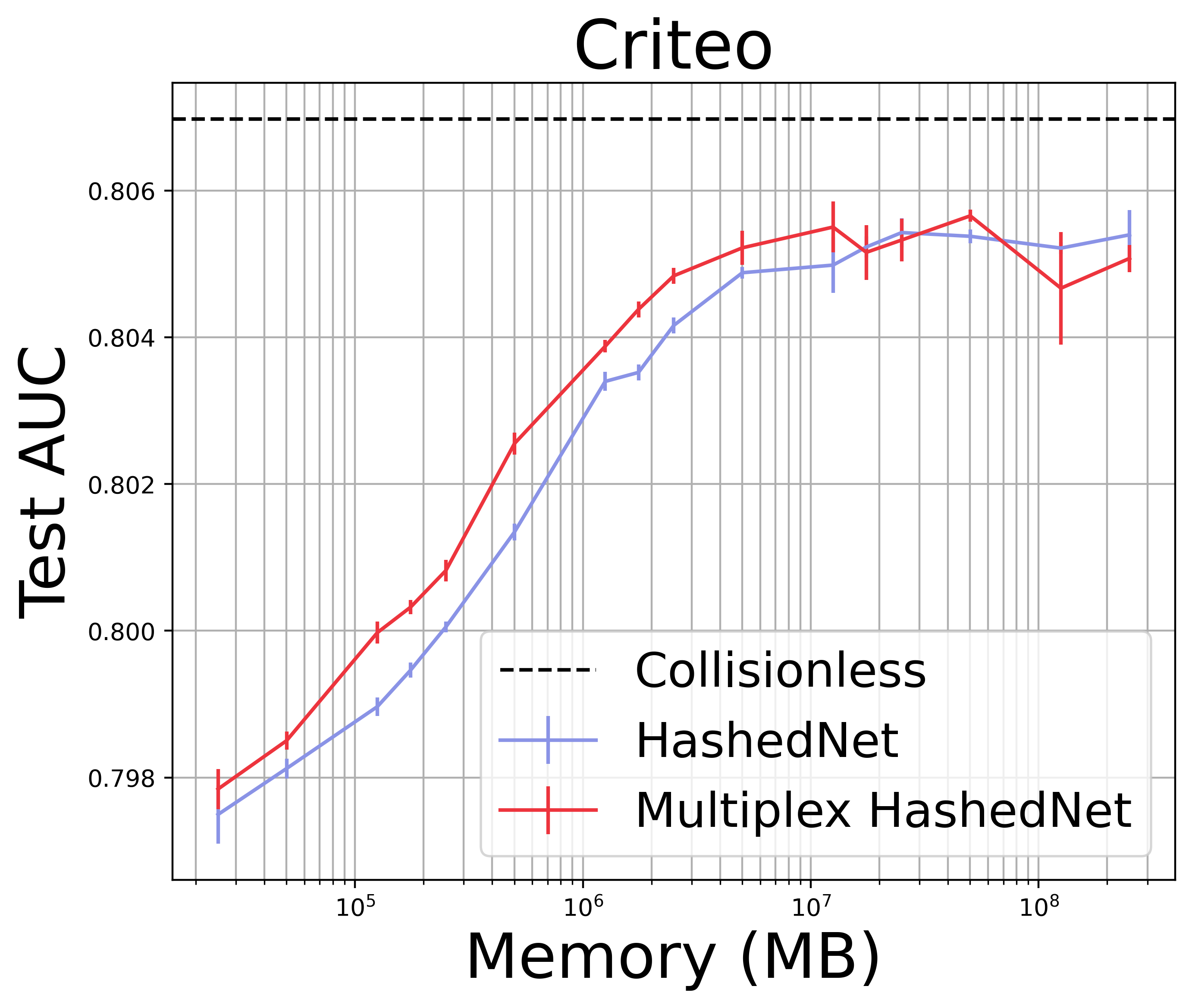}
\includegraphics[width=0.31\textwidth]{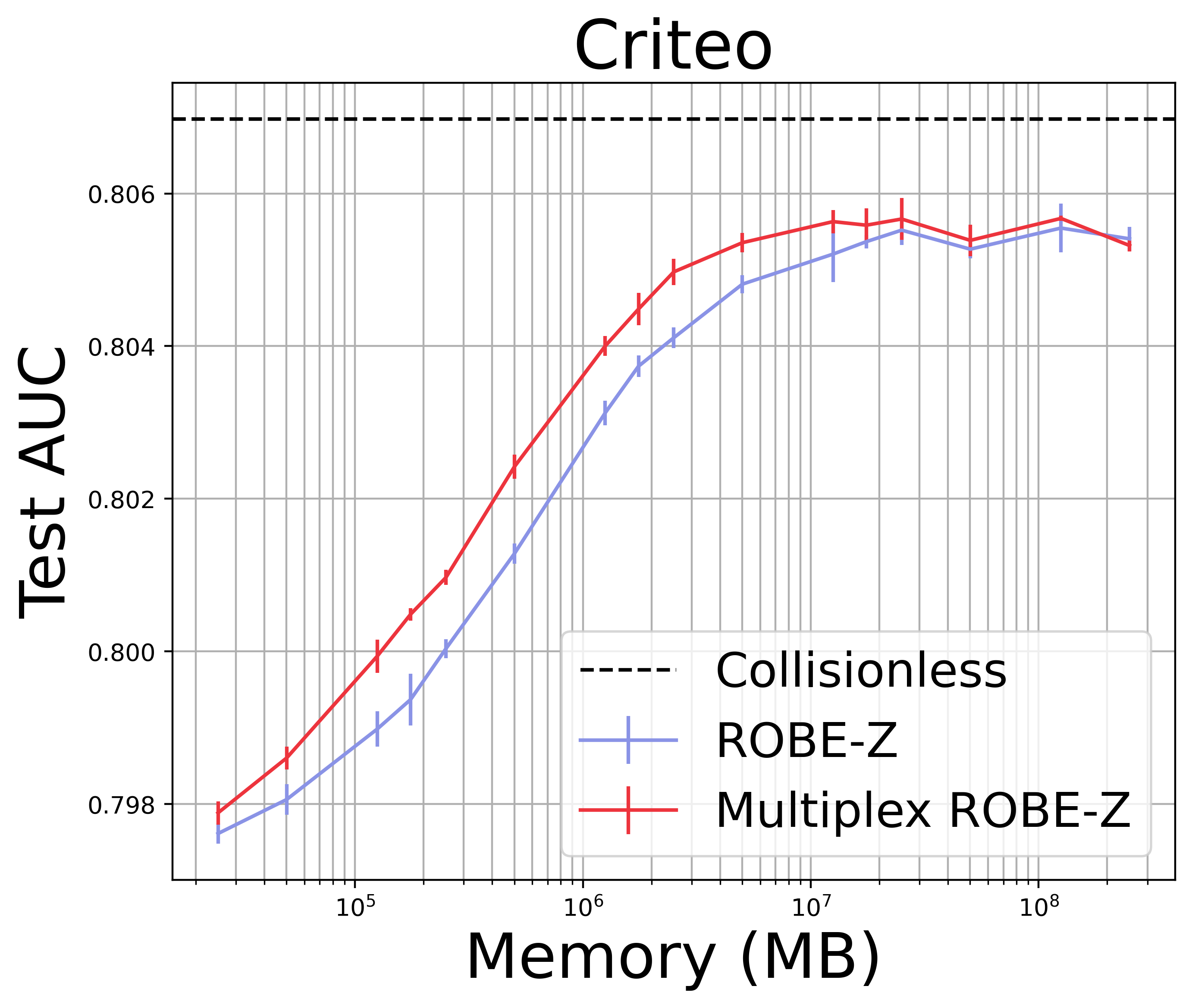}
\includegraphics[width=0.31\textwidth]{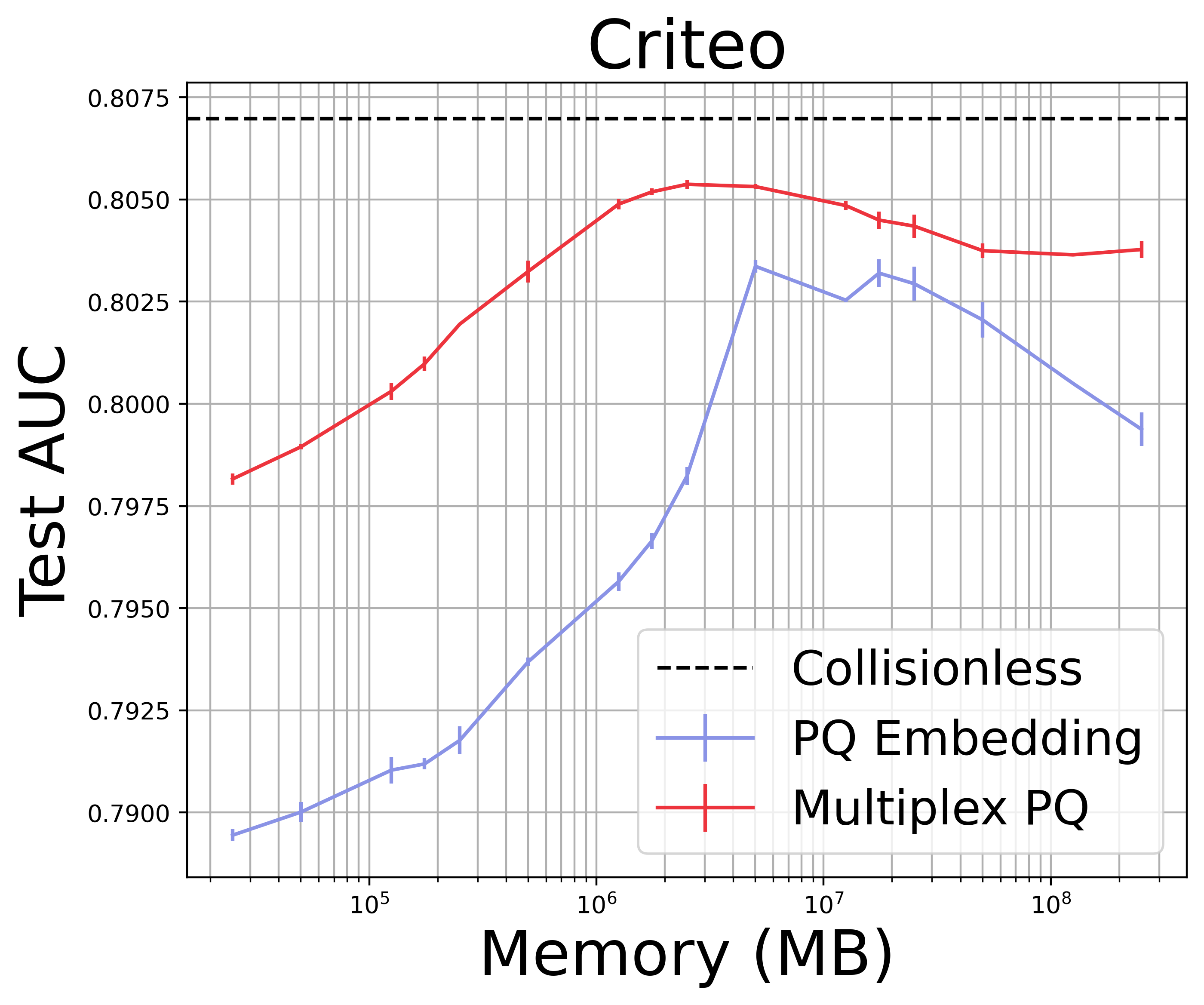}
\includegraphics[width=0.31\textwidth]{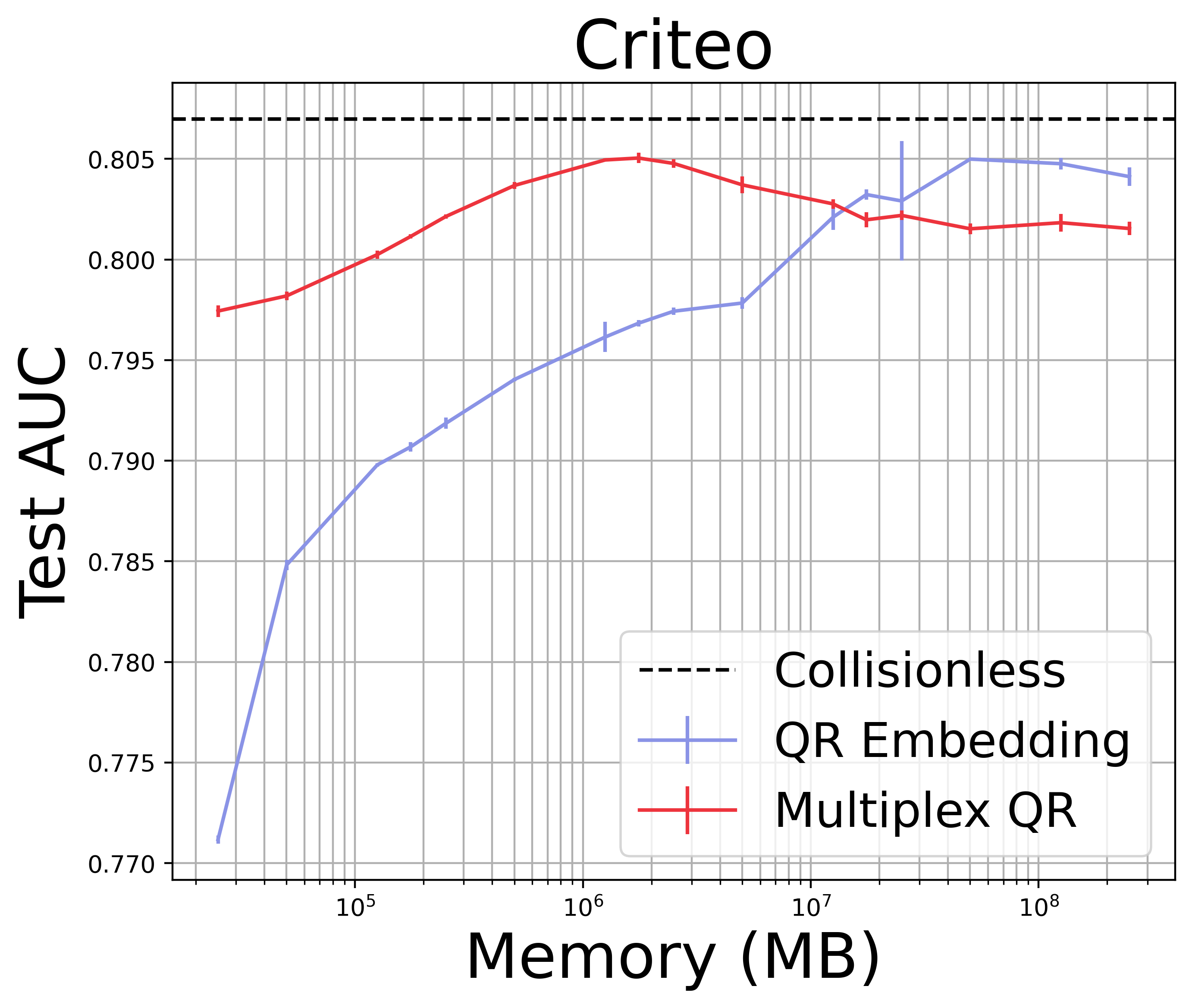}
\captionof{figure}{Full parameter-accuracy tradeoffs for all methods on the Criteo dataset. Naming convention is the same as in \Cref{tab:academic_results}. Note the log scale and that plots do not share the y-axis scale. The error bar width denotes the standard deviation over 5 model training runs.
}
\label{fig:app_criteo_all}
\end{figure}

\begin{figure}[t]
\centering
\includegraphics[width=0.31\textwidth]{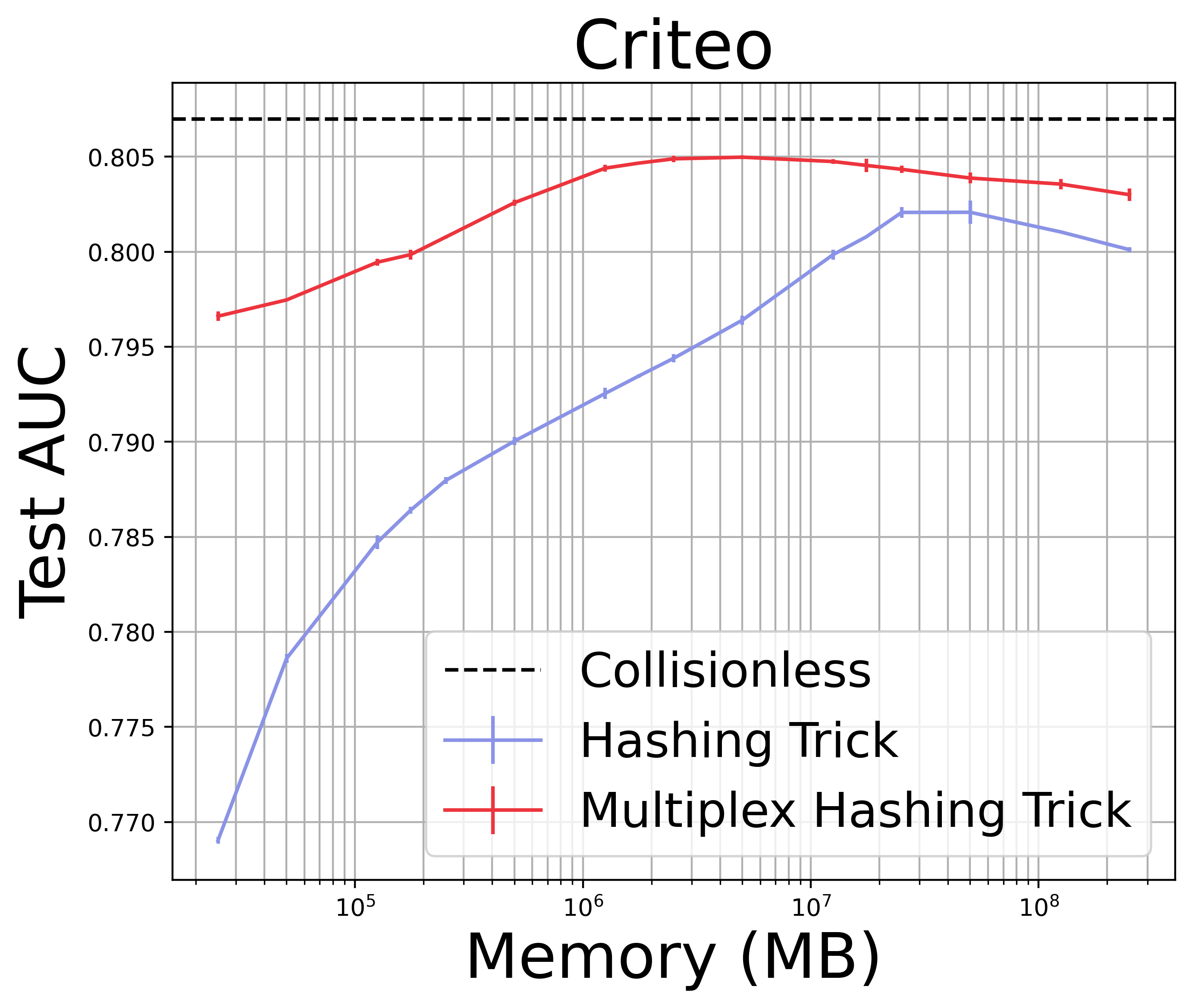}
\includegraphics[width=0.31\textwidth]{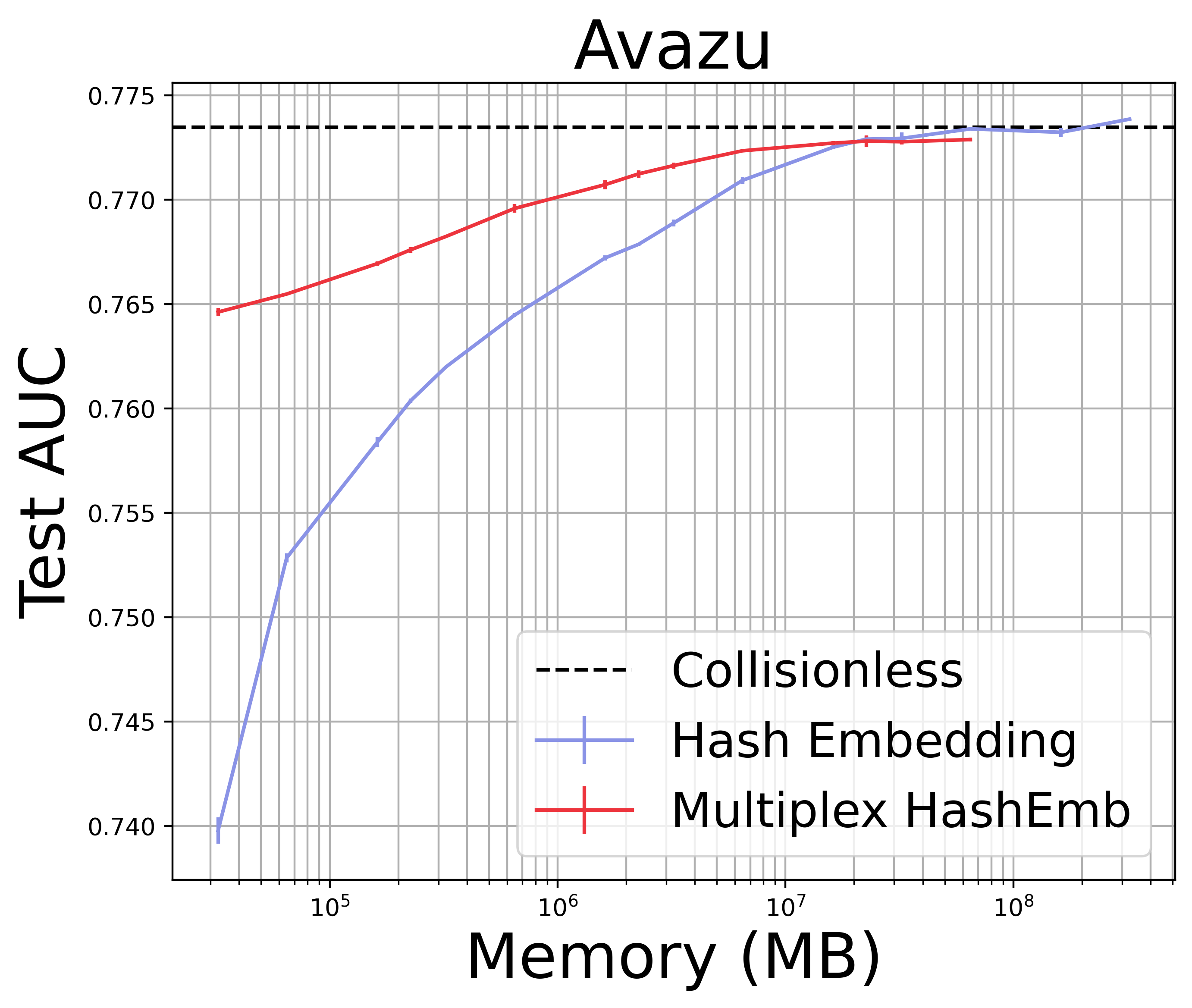}
\includegraphics[width=0.31\textwidth]{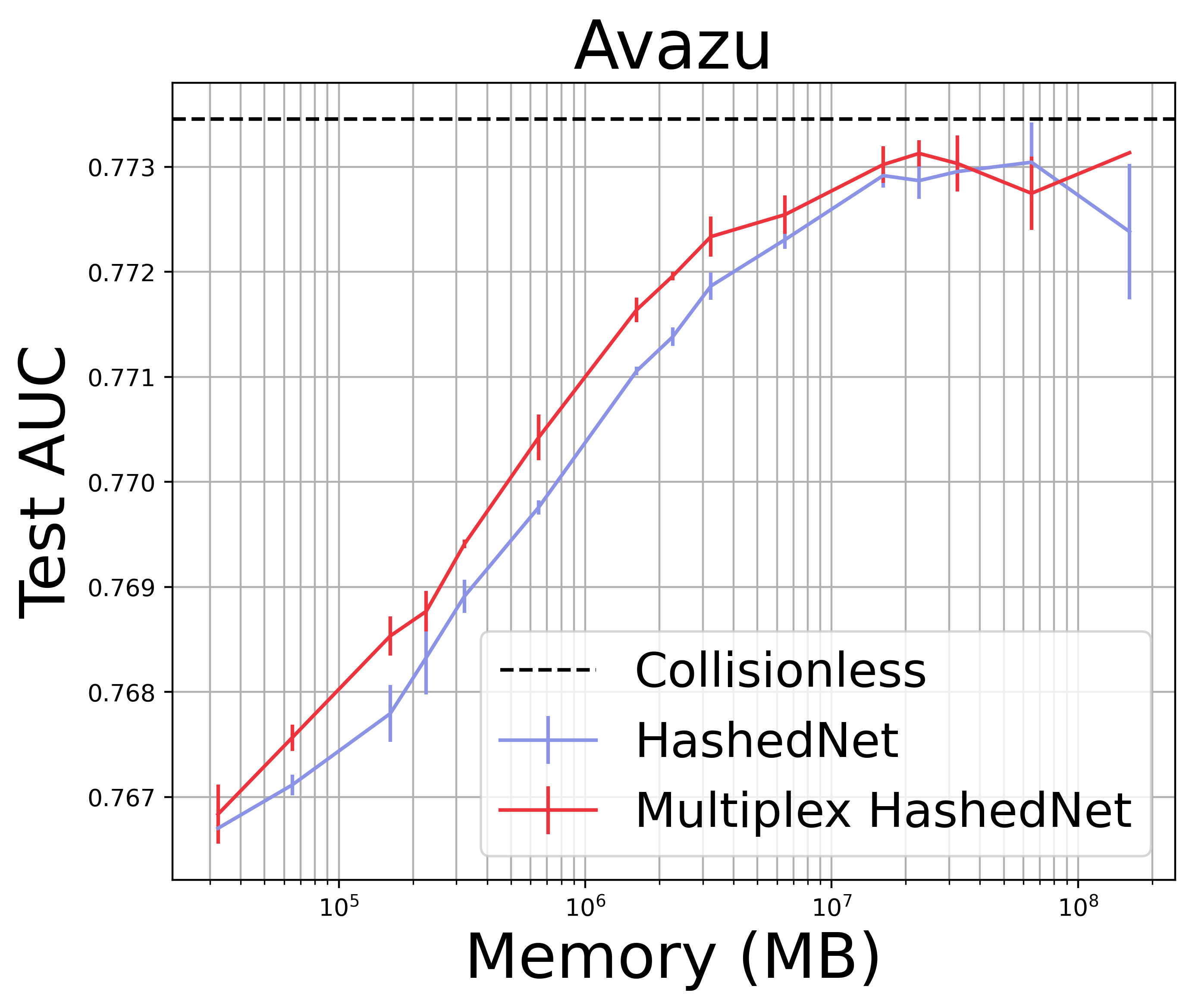}
\includegraphics[width=0.31\textwidth]{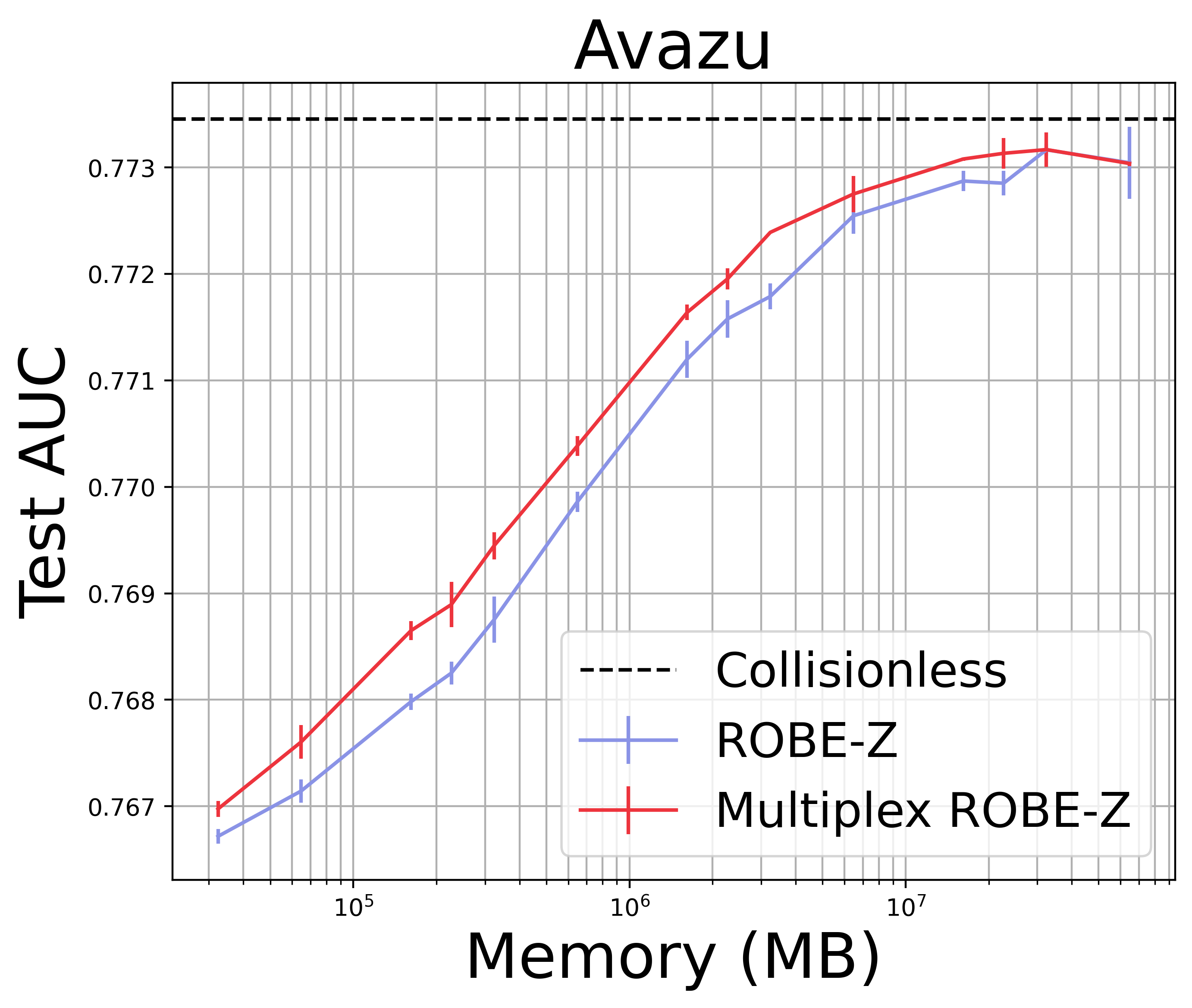}
\includegraphics[width=0.31\textwidth]{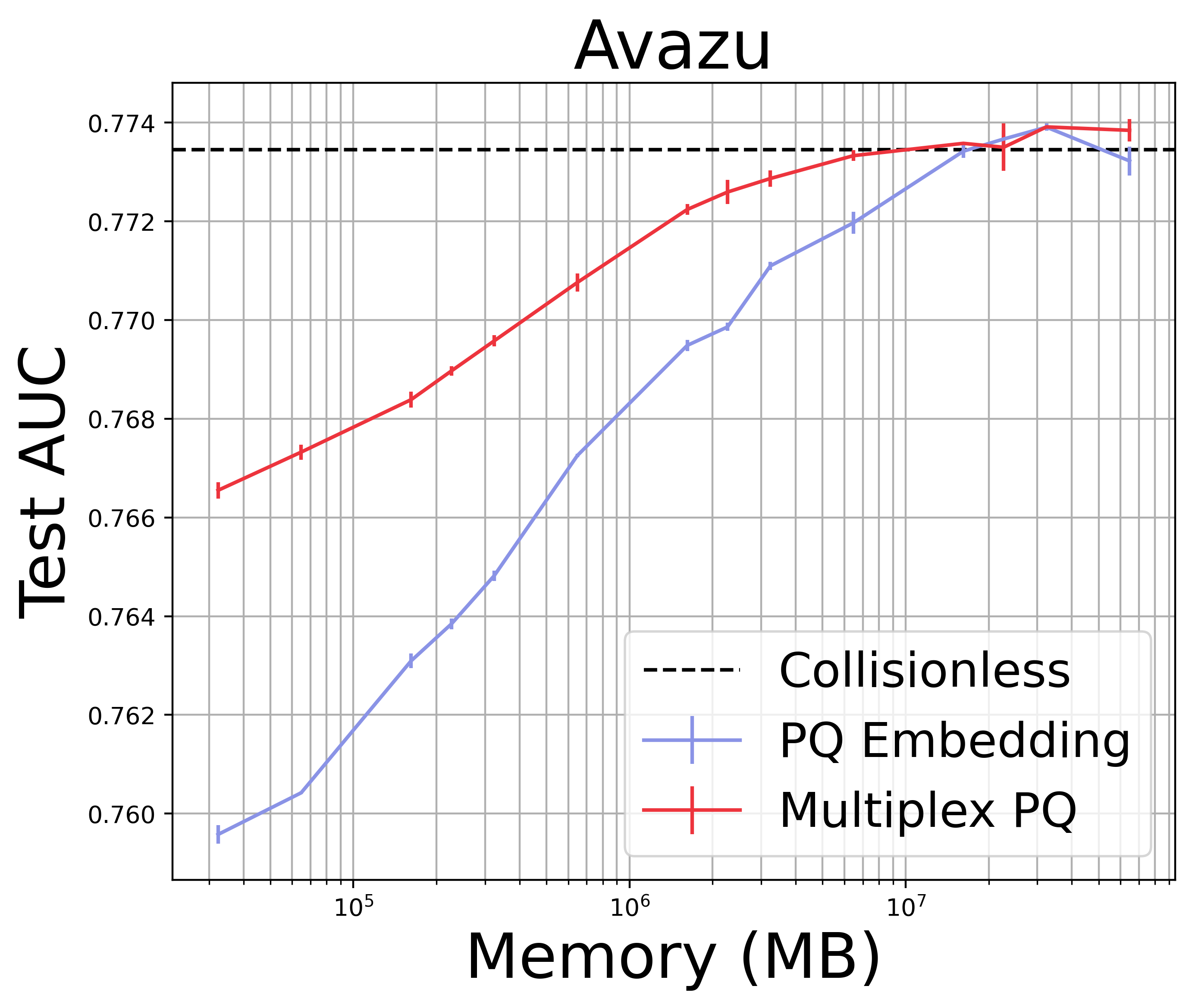}
\includegraphics[width=0.31\textwidth]{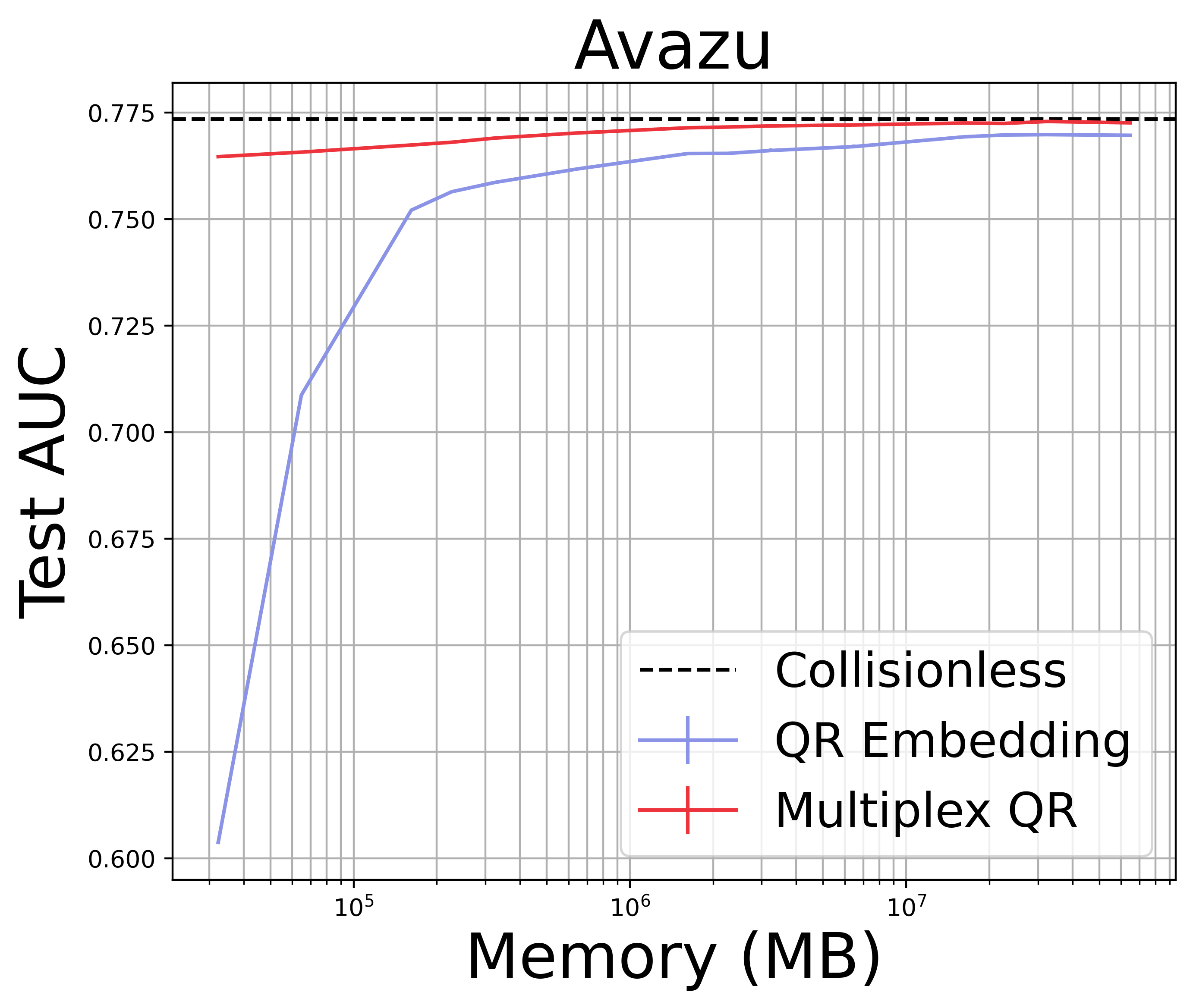}
\captionof{figure}{
Full parameter-accuracy tradeoffs for all methods on the Avazu dataset. Naming convention is the same as in \Cref{tab:academic_results}. Note the log scale and that plots do not share the y-axis scale. The error bar width denotes the standard deviation over 5 model training runs.
}
\label{fig:app_avazu_all}
\end{figure}

\begin{figure}[t]
\centering
\includegraphics[width=0.31\textwidth]{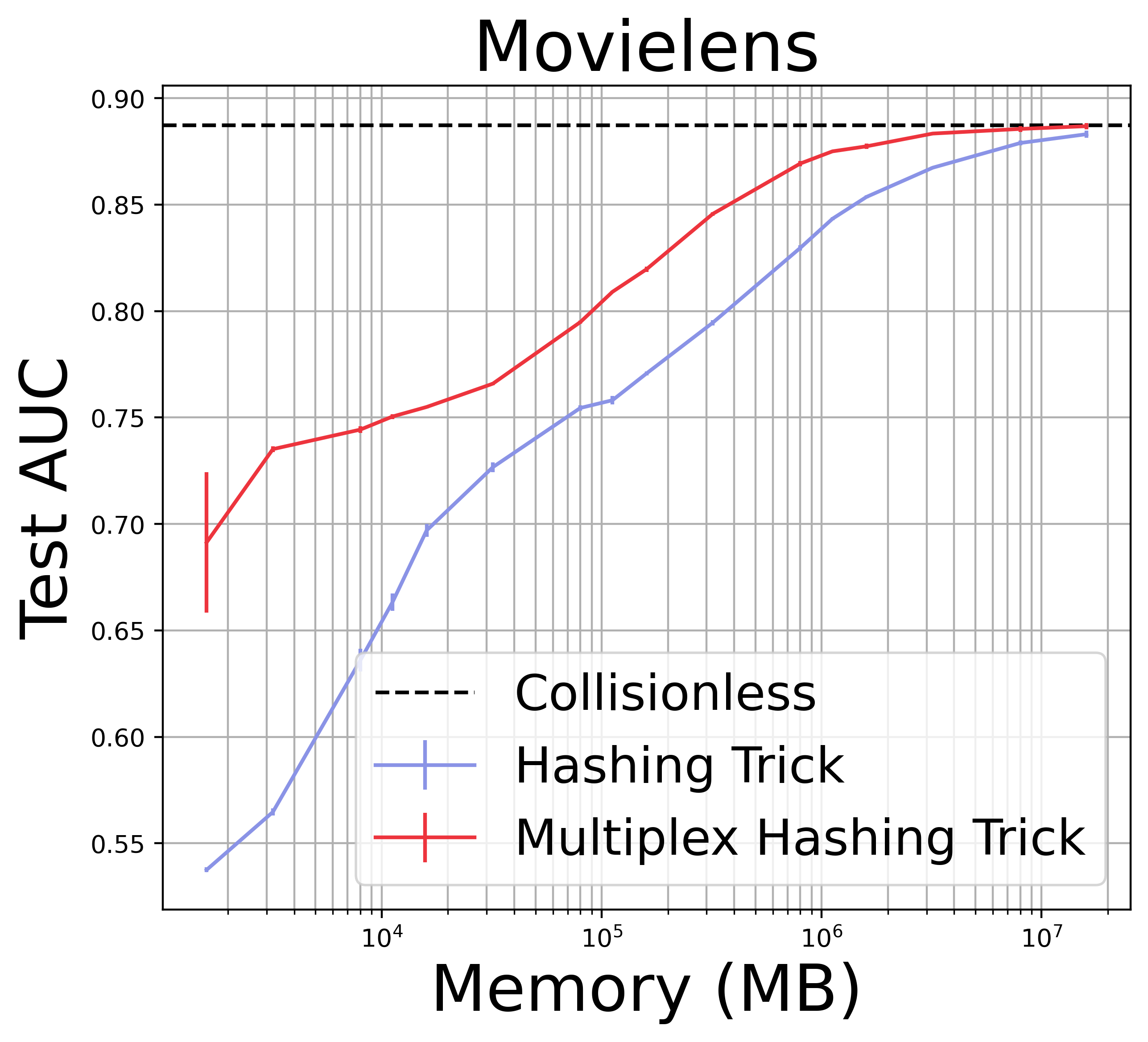}
\includegraphics[width=0.31\textwidth]{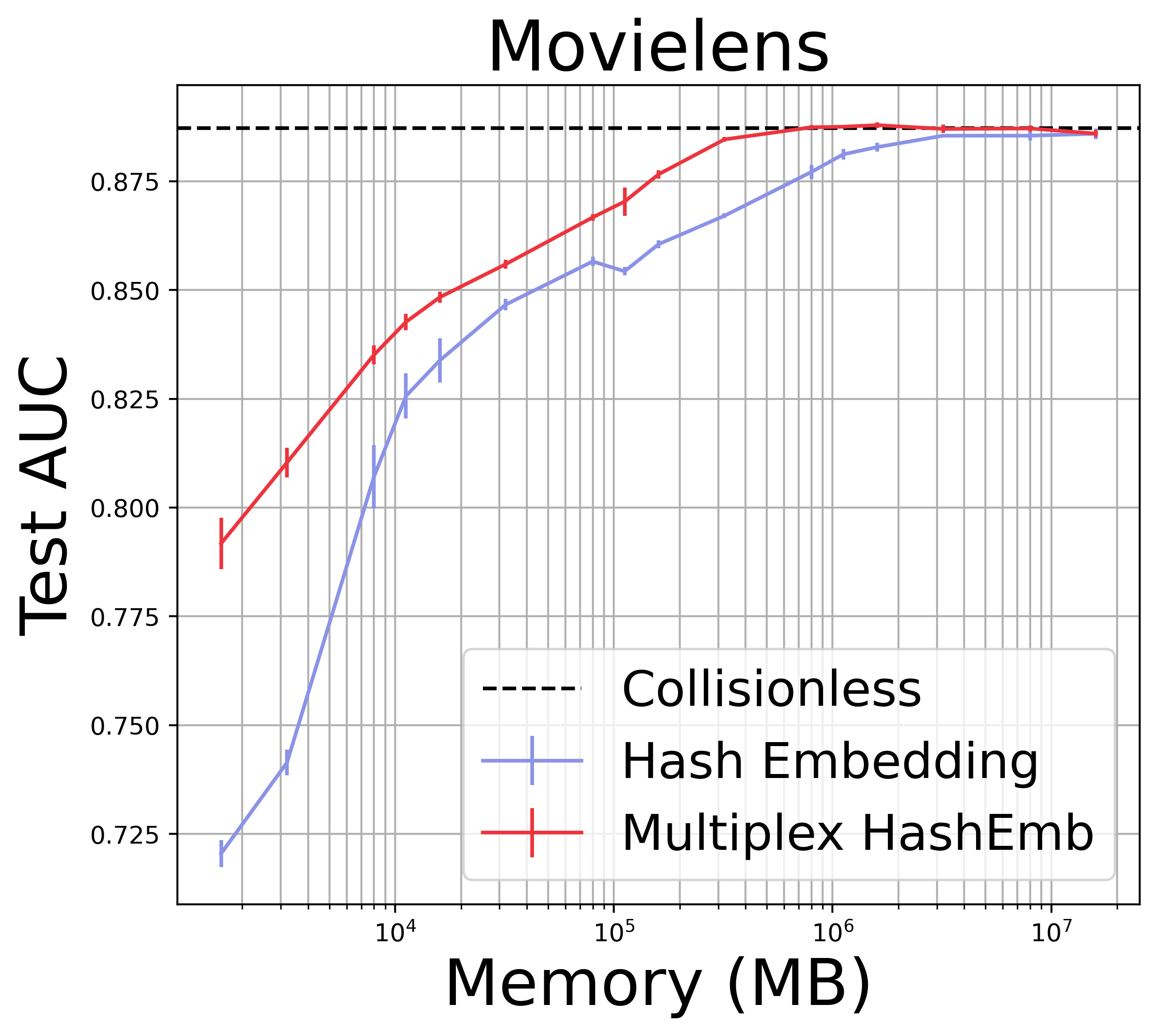}
\includegraphics[width=0.31\textwidth]{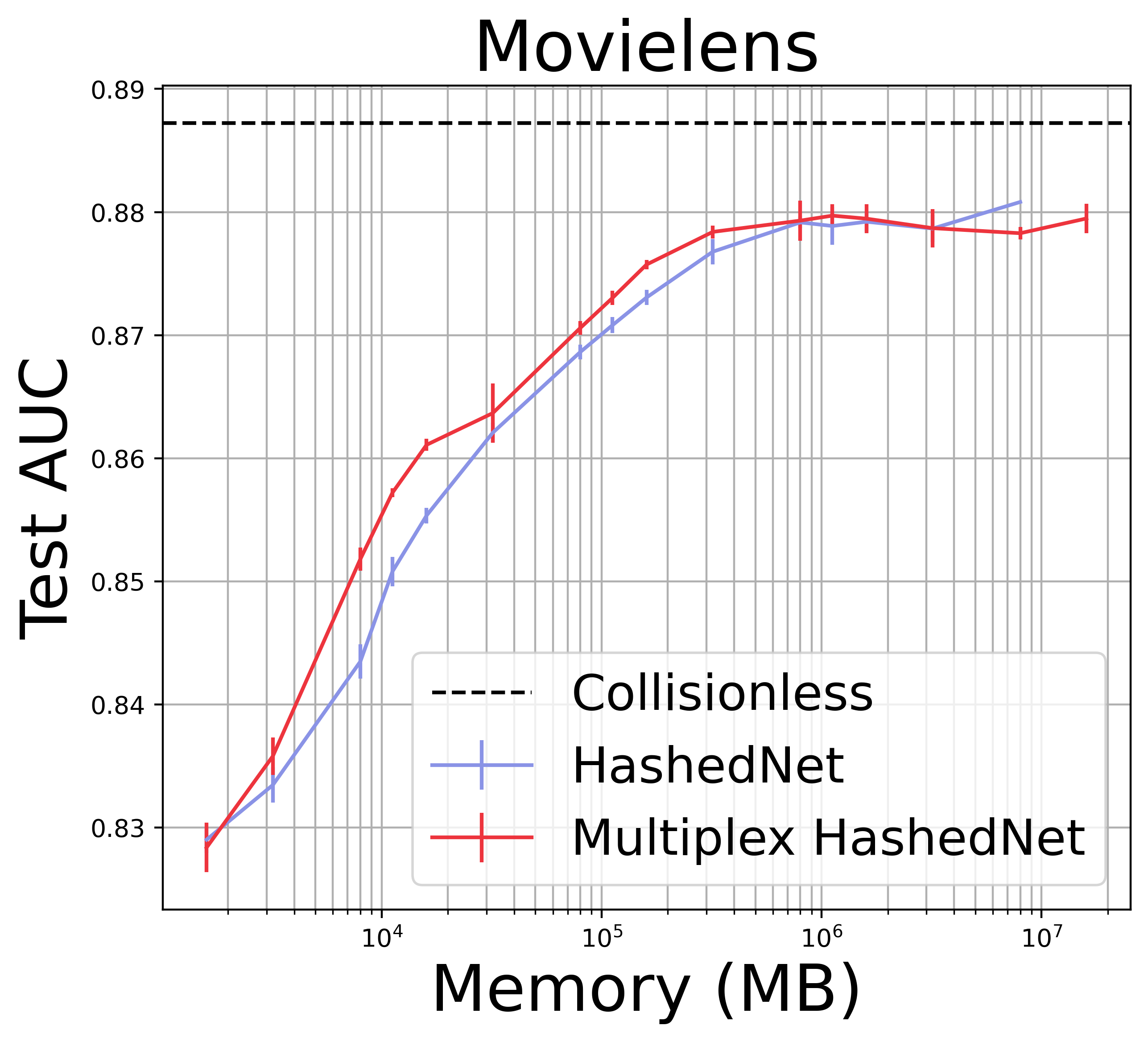}
\includegraphics[width=0.31\textwidth]{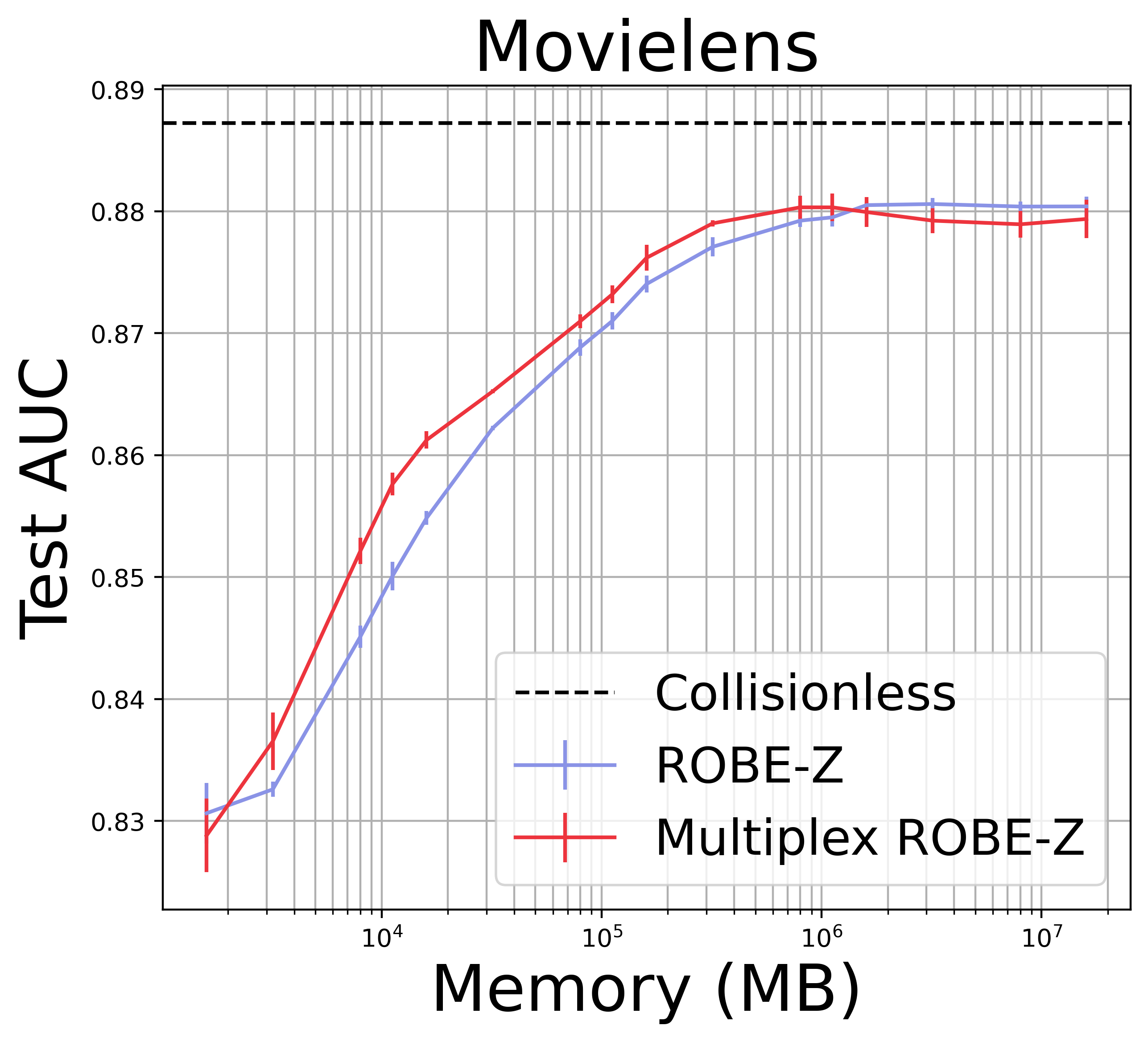}
\includegraphics[width=0.31\textwidth]{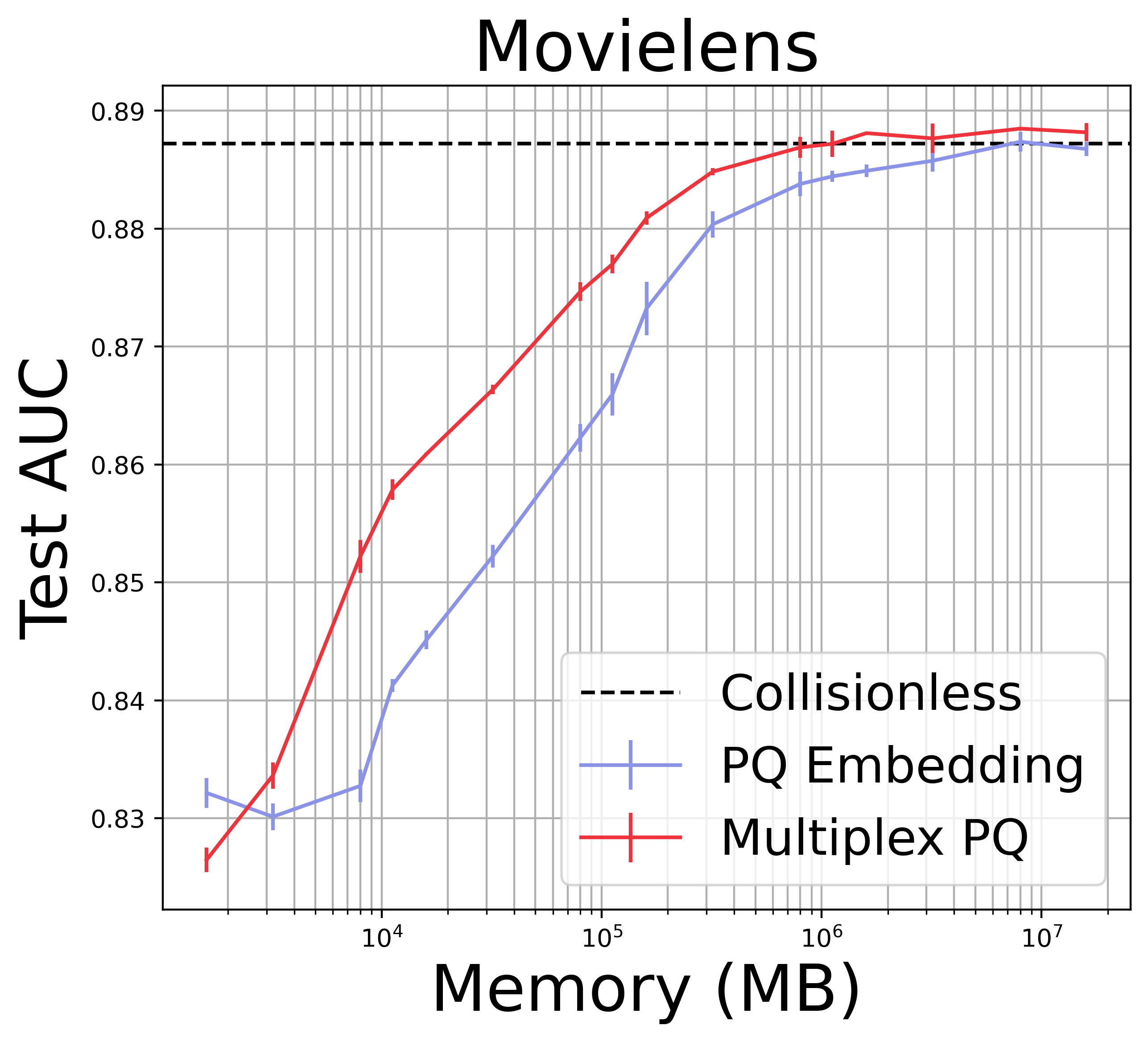}
\includegraphics[width=0.31\textwidth]{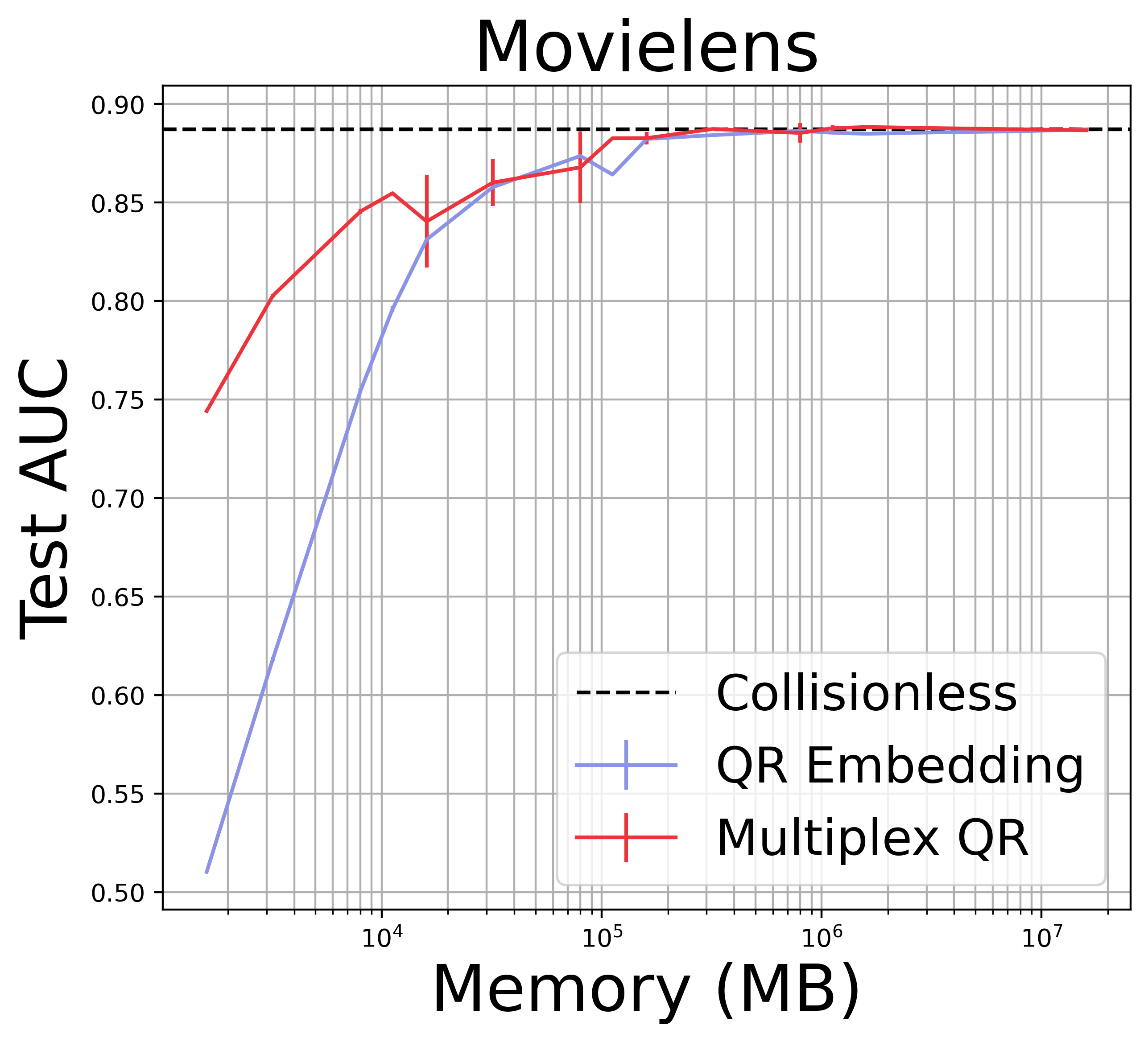}
\captionof{figure}{
Full parameter-accuracy tradeoffs for all methods on the Movielens-1M dataset. Naming convention is the same as in \Cref{tab:academic_results}. Note the log scale and that plots do not share the y-axis scale. The error bar width denotes the standard deviation over 5 model training runs.
}
\label{fig:app_movielens_all}
\end{figure}

\begin{table}
\parbox{.45\linewidth}{
\centering
\caption{Criteo vocabulary pruning.}
\footnotesize
\centering
\begin{tabular}{l l}
\toprule
 Feature & Vocabulary Size\\
\midrule
14 & 676 \\
15 & 533 \\
16 & 17447 \\
17 & 19995 \\
18 & 180 \\
19 & 13 \\
20 & 9693 \\
21 & 337 \\
22 & 3 \\
23 & 14637 \\
24 & 4378 \\
25 & 17795 \\
26 & 3067 \\
27 & 26 \\
28 & 6504 \\
29 & 18679 \\
30 & 10 \\
31 & 3102 \\
32 & 1557 \\
33 & 3 \\
34 & 18230 \\
35 & 10 \\
36 & 14 \\
37 & 13079 \\
38 & 56 \\
39 & 10581 \\
\bottomrule
\end{tabular}
\label{tab:criteo_vocab_trim}
}
\hfill
\parbox{.45\linewidth}{
\centering
\caption{Avazu vocabulary pruning.}
\footnotesize
\centering
\begin{tabular}{l l}
\toprule
 Feature & Vocabulary Size\\
\midrule
C1 & 8 \\
C14 & 2309 \\
C15 & 9 \\
C16 & 10 \\
C17 & 405 \\
C18 & 5 \\
C19 & 66 \\
C20 & 167 \\
C21 & 56 \\
app\_category & 29 \\
app\_domain & 277 \\
app\_id & 4438 \\
banner\_pos & 8 \\
device\_conn\_type & 5 \\
device\_id & 67767 \\
device\_ip & 163804 \\
device\_model & 6217 \\
device\_type & 6 \\
site\_category & 24 \\
site\_domain & 3887 \\
site\_id & 3317 \\
hour & 24 \\
\bottomrule
\end{tabular}
\label{tab:avazu_vocab_trim}

}
\end{table}

\subsection{Industry experiments}
\label{app:industrial_experiments}

\paragraph{Hardware and training time:} In several of our Unified Embedding deployments (\Cref{tab:industry_results}), we use TPUv4 for training and/or inference. For models of practical interest, embedding table size rarely affects the model training time in models of practical interest. As long as there is enough memory to support the embedding tables (i.e., enough CPU RAM or total accelerator memory), the training time is mostly governed by the forwards and backwards passes on the rest of the (upstream) network. We do not observe a substantial increase in training / inference time on either CPU or TPUv4~\citep{jouppi2023tpu}. 

\paragraph{Modeling tradeoffs:} Unified embedding gives up some flexibility in setting the per-feature embedding dimension, and other (more complicated) multiplexed methods are better on the Pareto-frontier for offline experiments. In exchange, we get more per-feature embedding capacity, a much simpler hyperparameter configuration, hardware-friendly algorithm, and easy feature engineering. For example, to add new features to a model without multiplexing, we must specify and tune the table size and dimension for a new set of embedding tables. With multiplexing, we can simply add a new feature to an existing table at the cost of a few additional lookups. Another option is to use the conventional scheme for new features, then regularly consider a model update that migrates several new features into the unified embedding (typically resulting in performance gains).

\textbf{Engineering tradeoffs:} During our experience deploying Unified Embedding in a dozen applications, we have not had problems with observability, monitoring, or model health / stability. Instead, we find that Unified Embedding results in better load balancing and alleviate TPU hot-spot issues (as our embedding rows are distributed over cores). The high-level takeaway is that it is easier to balance one large table across accelerators than many small tables of different shapes.

\textbf{Model health:} All of our industry online experiments assume healthy models as a prerequisite. These models use online sequential training and adapt to constant variations in data distributions. The consistent improvements from unified embedding in these systems is strong evidence that our methods do not interfere with model health or add maintenance costs.

\end{document}